\documentclass{article} 
\usepackage{iclr2027_conference,times}

\usepackage{amsmath,amsfonts,bm}

\def\eqref#1{equation~\ref{#1}}

\def\1{\bm{1}}

\DeclareMathAlphabet{\mathsfit}{\encodingdefault}{\sfdefault}{m}{sl}
\SetMathAlphabet{\mathsfit}{bold}{\encodingdefault}{\sfdefault}{bx}{n}

\def\sR{{\mathbb{R}}}

\usepackage{hyperref}
\usepackage{url}
\usepackage{subcaption}
\usepackage{graphicx}
\usepackage{float}
\usepackage{algorithm}
\usepackage{algorithmic}
\usepackage{amsthm}
\usepackage{booktabs}
\usepackage{multirow}
\usepackage{soul}

\newcommand{\resp}{\textcolor{black}}

\title{Missing Data Imputation under Manifold Hypothesis}

\author{Bi, Zelong \\
School of Mathematics \& Statistics\\
University of New South Wales\\
High St, Kensington, NSW 2052 \\
\texttt{zelong.bi@unsw.edu.au} \\
\And
Ibenegbu, Amuchechukwu \\
School of Mathematics \& Statistics\\
University of New South Wales\\
High St, Kensington, NSW 2052 \\
\texttt{a.ibenegbu@unsw.edu.au} \\
\And
Moka, Sarat \\
School of Mathematics \& Statistics\\
University of New South Wales\\
High St, Kensington, NSW 2052 \\
\texttt{s.moka@unsw.edu.au} \\
}

\def\sR{{\mathbb{R}}}
\newtheorem{theorem}{Theorem}
\newtheorem*{theorem*}{Theorem}

\begin{document}

\maketitle

\begin{abstract}
The manifold hypothesis posits that high-dimensional data are concentrated near a low-dimensional embedded manifold. Recent advances in mixture variational autoencoders (VAEs) provide a powerful tool for extracting such underlying structure in a faithful manner. The resulting geometric structure naturally introduces local and global relationships among variables, thereby providing a systematic way of imputing missing data. We propose a model-based imputation method that enables sampling from \( p(\bm{x}_{\mathrm{mis}} \mid \bm{x}_{\mathrm{obs}}) \) via a sampling-importance-resampling (SIR) procedure, which can be further augmented with a joint diffusion model in the latent space. Our method imputes missing data while respecting the underlying geometry, achieves competitive performance compared to state-of-the-art procedures, quantifies uncertainty in the imputations, and is model-based, thereby enabling on-the-fly imputation without rerunning the entire procedure.
\end{abstract}

\section{Introduction}

Missing data imputation investigates systematic methods for filling missing values in a dataset \citep{rubin1976inference}, while the manifold hypothesis posits that high-dimensional data are concentrated near a low-dimensional embedded manifold \citep{meilua2024manifold}. The former seeks to make use of the structure, while the latter provides a structure; hence, the two are naturally related. Hence, it is surprising that mainstream data imputation methods have yet to make this connection. As shown in Figure~\ref{fig:01b}, state-of-the-art leading methods, such as MissForest \citep{stekhoven2012missforest}, fail to respect the underlying geometry when it exists. On the other hand, manifold structure provides deterministic relationships among variables. As a simple example, if the underlying manifold is known to be a unit circle, then given the value of $x_1$, we know for sure that $x_2$ can only be $\pm\sqrt{1 - x_1^2}$. Imputation quality can be greatly improved if we make use of the geometric information, as shown in Figure~\ref{fig:01c}.

\begin{figure}[H]
    \centering
    \begin{subfigure}[b]{0.3\textwidth}
        \centering
        \includegraphics[width=\textwidth]{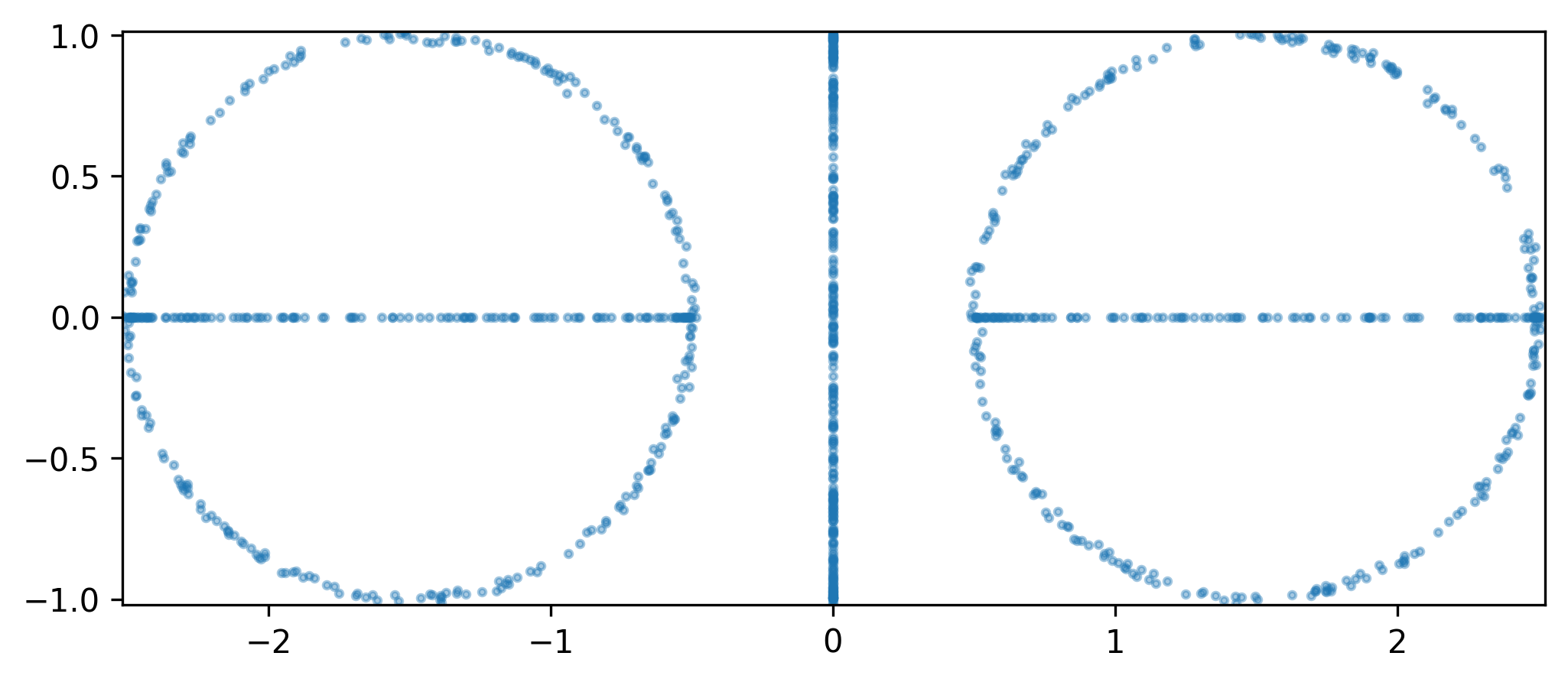}
        \caption{Data with missing values}
        \label{fig:01a}
    \end{subfigure}
    \hfill
    \begin{subfigure}[b]{0.3\textwidth}
        \centering
        \includegraphics[width=\textwidth]{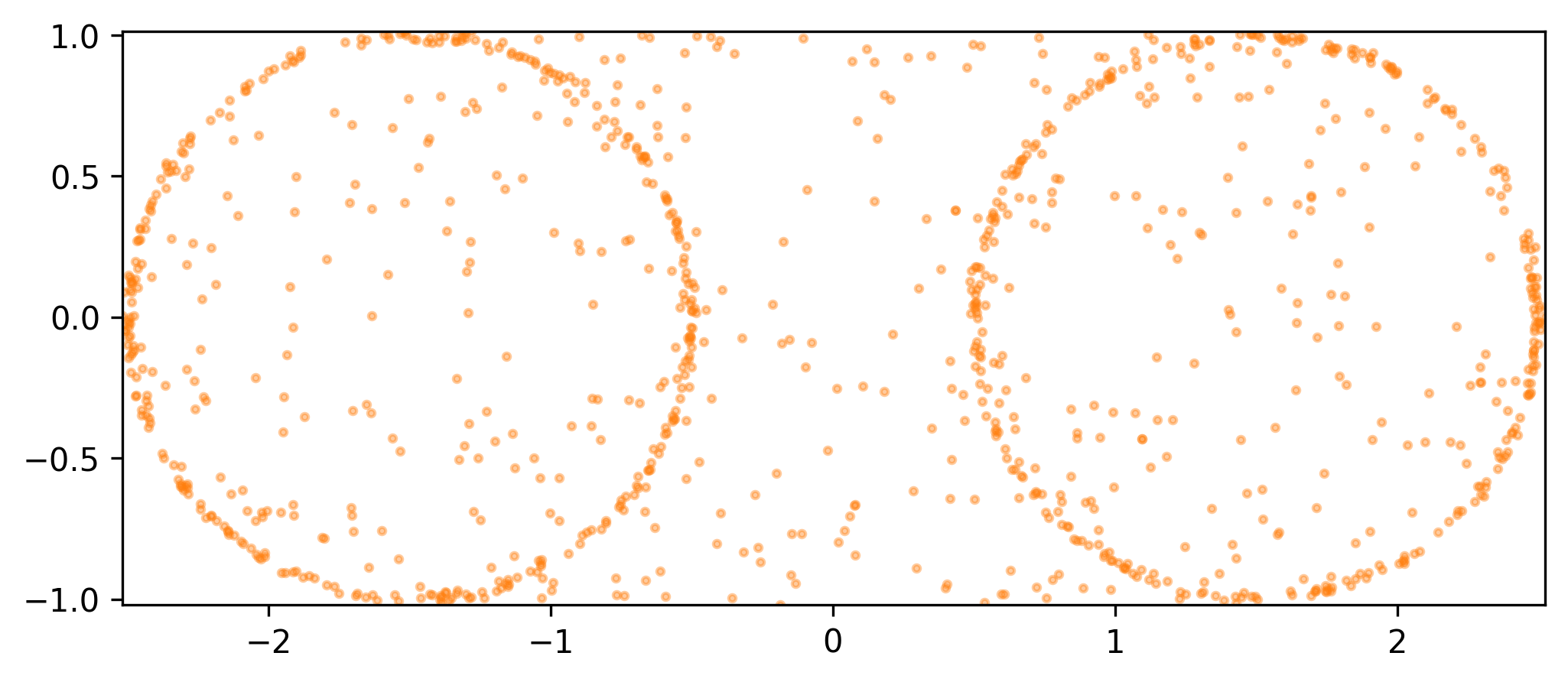}
        \caption{MissForest}
        \label{fig:01b}
    \end{subfigure}
    \hfill
    \begin{subfigure}[b]{0.3\textwidth}
        \centering
        \includegraphics[width=\textwidth]{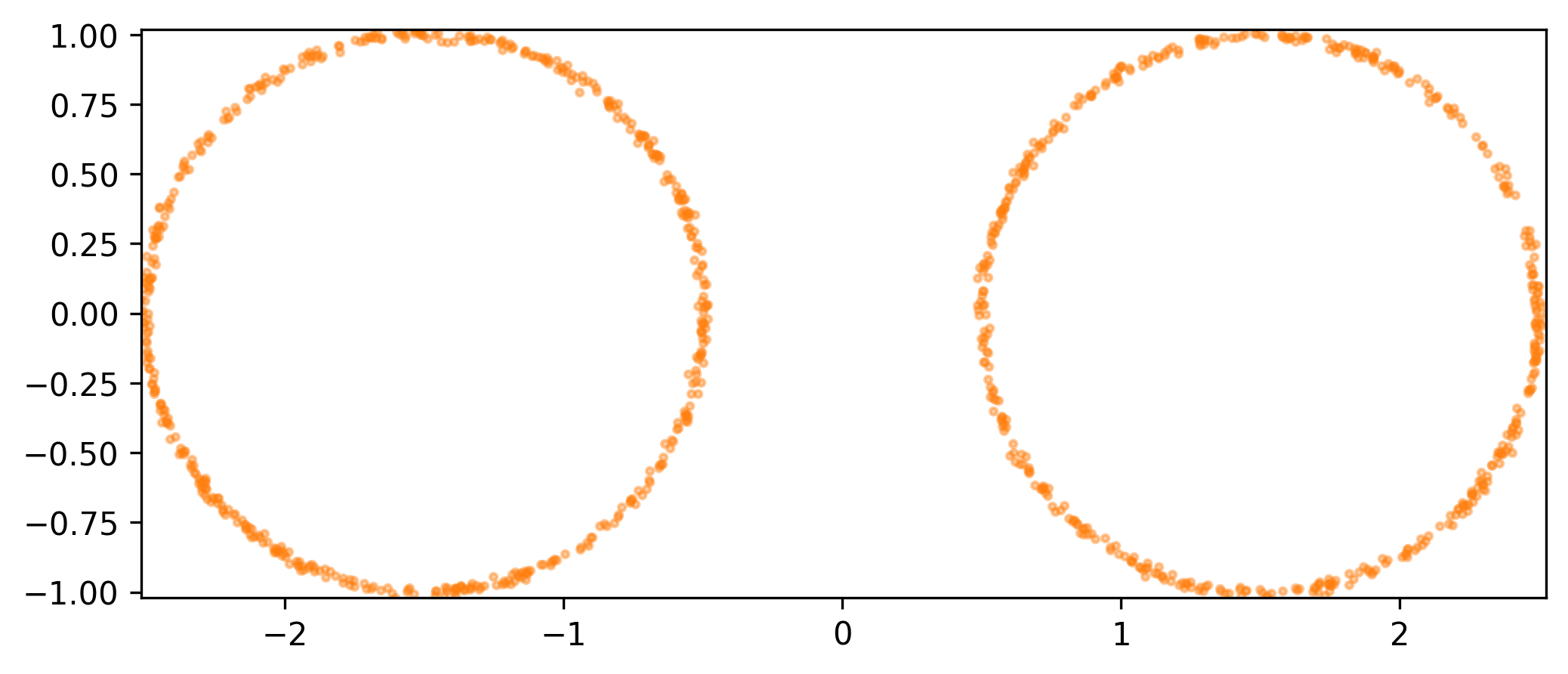}
        \caption{\resp{Our method}}
        \label{fig:01c}
    \end{subfigure}
    \caption{\resp{Comparison of the imputation results between MissForest and our proposed method. In Figure~\ref{fig:01a}, missing $x_2$ entries in the data points $(x_1, x_2)$ are replaced with zeros.}}
    \label{fig:01}
\end{figure}

In practice, the underlying manifold structure is unknown and must be learned, and given the observed part  $\bm{x}_\mathrm{obs}$ of a data point, a systematic way of imputing the missing part $\bm{x}_\mathrm{mis}$ is needed, which ideally should respect both the underlying geometry and the data distribution. The former problem is known as \textit{manifold learning}, and mixture models of VAEs by \cite{alberti2024manifold} provide the most recent and promising method to recover the underlying geometric structure, given that the manifold hypothesis holds to a reasonable extent\resp{, i.e., a low-dimensional manifold is sufficient to capture the essential structure of the dataset.} In this article, we solve the second problem by making use of the captured manifold structure. In particular, we draw directly from \( p(\bm{x}_{\mathrm{mis}} \mid \bm{x}_{\mathrm{obs}}) \) via a Bayesian procedure, yielding imputations that are both geometry- and distribution-aware. \resp{This procedure can be further enhanced by introducing a joint diffusion process which accommodates both continuous variables and discrete variables}, enabling the generation of high-quality samples that preserve the underlying geometric and distributional characteristics.

Our main contributions are as follows:
\begin{enumerate}
    \item We propose an imputation procedure that samples directly from \( p(\bm{x}_{\mathrm{mis}} \mid \bm{x}_{\mathrm{obs}}) \) with the help of the sampling-importance-resampling (SIR) procedure \citep{rubin1987calculation} that leverages the learned manifold structure. \resp{This yields a model-based imputation approach that respects both the geometric and distributional structure of the data, enables uncertainty quantification of the fills, and supports on-the-fly imputation without rerunning the entire procedure.}

    \item \resp{We also develop a joint diffusion process in the low-dimensional coordinate space that simultaneously updates the coordinates and membership labels associated with different regions of the underlying manifold. This enables efficient and high-quality data generation which preserves the underlying geometry and data distribution, thereby improving the performance of our imputation method.}

    \item Experiments show that our proposed method imputes missing data while respecting the underlying geometry and achieves competitive performance compared to state-of-the-art procedures, evaluated using root mean squared error (RMSE), Wasserstein distance, as well as downstream task performance.
\end{enumerate}

For the remainder of the article, Section~\ref{background} introduces relevant background knowledge for our framework, Section~\ref{design} describes the proposed procedure, experimental results are presented in Section~\ref{experiments}, Section~\ref{conclusion} concludes the article, and additional materials are provided in the appendix.

\section{Background and Related Work}\label{background}

\textbf{Missing Data Imputation}. Missing data imputation seeks to estimate and replace missing values in a dataset systematically. Classical approaches include mean imputation, principal component analysis (PCA) imputation \citep{dray2015principal}, and $k$-nearest neighbor (KNN) imputation \citep{troyanskaya2001missing}. More advanced methods include Multiple Imputation by Chained Equations (MICE) \citep{van2011mice}, its nonparametric extension MissForest \citep{stekhoven2012missforest}, and generative approaches such as Generative Adversarial Imputation Networks (GAIN) \citep{yoon2018gain} and variational autoencoder (VAE)-based methods \citep{mccoy2018variational}. All of these have provided the foundation for many more recent approaches \citep{ou2024missing, albu2025missforestpredict}. Among these, MissForest is often considered one of the strongest general-purpose methods due to its ability to model nonlinear relationships and mixed data types.

Most imputation methods are \emph{transductive}, directly filling missing values in a given dataset without learning a reusable model. In contrast, \emph{inductive} methods such as VAEs learn a predictive model that can be applied to unseen data. \resp{Transductive methods often achieve higher accuracy on a fixed dataset because they iteratively optimize the imputations using all available observations. However, when new data arrive, the entire optimization procedure must be repeated to obtain imputations for the expanded dataset.} Inductive methods overcome this challenge, usually at the cost of requiring a sufficient number of complete instances for learning a model.

\textbf{MCAR, MAR and MNAR}. Let $\bm{x} \in \sR^p$ denote a data vector and $\bm{m} \in \{0,1\}^p$ a missingness indicator, where $m_j=0$ indicates that $j$-th element $x_j$ is missing. The observed data are then $\bm{x}\odot\bm{m}$, whose components can be partitioned into the observed variables $\bm{x}_{\mathrm{obs}}$ and the missing variables $\bm{x}_{\mathrm{mis}}$. 

The three standard missingness mechanisms, Missing Completely at Random (MCAR), Missing at Random (MAR), and Missing Not at Random (MNAR), were introduced by \cite{rubin1976inference}. MCAR assumes that $\bm{x}$ and $\bm{m}$ are independent. MAR assumes for any $\bm{m}_0$, $\bm{x}_1$, and $\bm{x}_2$,
\begin{equation}
    p(\bm{m}=\bm{m}_0\mid \bm{x}=\bm{x}_1) = p(\bm{m}=\bm{m}_0\mid \bm{x}=\bm{x}_2)
\end{equation}
whenever $\bm{x}_1\odot\bm{m}_0=\bm{x}_2\odot\bm{m}_0$. Any mechanism that is neither MCAR nor MAR is classified as MNAR. The difficulty of imputation depends strongly on the missingness mechanism, with MNAR generally presenting greater challenges than MCAR and MAR. \resp{Like most other imputation methods, our proposal is developed under the MCAR assumption, relying on $p(\bm{x}\mid \bm{m}=\bm{1}) = p(\bm{x})$ to ensure that a model trained on complete cases can be applied to data with missing components, but we also extensively evaluate its robustness under the other two missingness mechanisms.}

\textbf{Manifolds and Intrinsic Dimension}. For $d \leq p$, a $d$-dimensional embedded manifold in $\sR^p$ is a smooth geometric object that locally admits a $d$-dimensional coordinate representation everywhere. In particular, it can be covered by a collection of possibly overlapping regions called \textit{charts}, each of which can be mapped smoothly to $\sR^d$. The manifold hypothesis posits that data in $\sR^p$ lie on or near such a manifold with dimension $d \ll p$, where $d$ is referred to as the \textit{intrinsic dimension}.

Estimating intrinsic dimension is a challenging problem that has attracted many approaches, notable methods include maximum likelihood estimators (MLE, \cite{levina2004maximum}), the two-nearest neighbor estimator (TwoNN, \cite{facco2017estimating}), and curvature-aware principal component analysis (CA-PCA, \cite{gilbert2025pca}).

\textbf{Manifold Learning via Mixture VAEs}.
Classical manifold learning methods include Isomap \citep{tenenbaum2000global}, diffusion maps \citep{coifman2006diffusion}, and UMAP \citep{mcinnes2018umap}, which are typically designed for visualization rather than faithful recovery of the underlying geometry. More recently, \cite{alberti2024manifold} proposed a mixture of VAEs, where each component can be interpreted as a chart, making the model naturally aligned with the manifold structure and more faithful in capturing the underlying geometry.

Let $\bm{x} \in \sR^p$ denote the original data variable, latent variables $\bm{z} \in \mathbb{R}^d$ and $c \in \{1, \dots, C\}$ are introduced, where $\bm{z}$ corresponds to local coordinates and $c$ indexes the charts, together with some simple prior $p(\bm{z})$. This yields the encoder--decoder factorization
\begin{equation}
p(\bm{z}, c \mid \bm{x}) = p(\bm{z} \mid c, \bm{x})p(c \mid \bm{x}), 
\qquad
p(\bm{x}, \bm{z}, c) = p(\bm{x} \mid \bm{z}, c)p(c \mid \bm{z})p(\bm{z}).
\end{equation}

As in standard VAEs, the conditional distributions $p(\bm{z} \mid c, \bm{x})$ and $p(\bm{x} \mid \bm{z}, c)$ are intractable and are therefore approximated by Gaussian distributions:
\begin{equation}\label{eqn-conds}
p(\bm{z} \mid c, \bm{x})
\ \dot\sim \
\mathcal{N}(E_c(\bm{x}), \sigma_z^2 I_d),
\qquad
p(\bm{x} \mid \bm{z}, c)
\ \dot\sim \
\mathcal{N}(D_c(\bm{z}), \sigma_x^2 I_p),
\end{equation}
where $E_c : \mathbb{R}^p \to \mathbb{R}^d$ and $D_c : \mathbb{R}^d \to \mathbb{R}^p$ are smooth neural network maps, with $D_c$ injective and $E_c \circ D_c$ \resp{being the identity map}, corresponding to local coordinate maps and their inverses. The loss function is defined to reflect both geometric and likelihood constraints during training, which in the end yield the learned distributions
\[
p(\bm{z} \mid c, \bm{x}), \qquad
p(\bm{x} \mid \bm{z}, c), \qquad
p(c), \qquad
p(c \mid \bm{x}).
\]

\begin{figure}[H]
    \centering

    \begin{subfigure}[b]{0.23\textwidth}
        \centering
        \includegraphics[width=\textwidth]{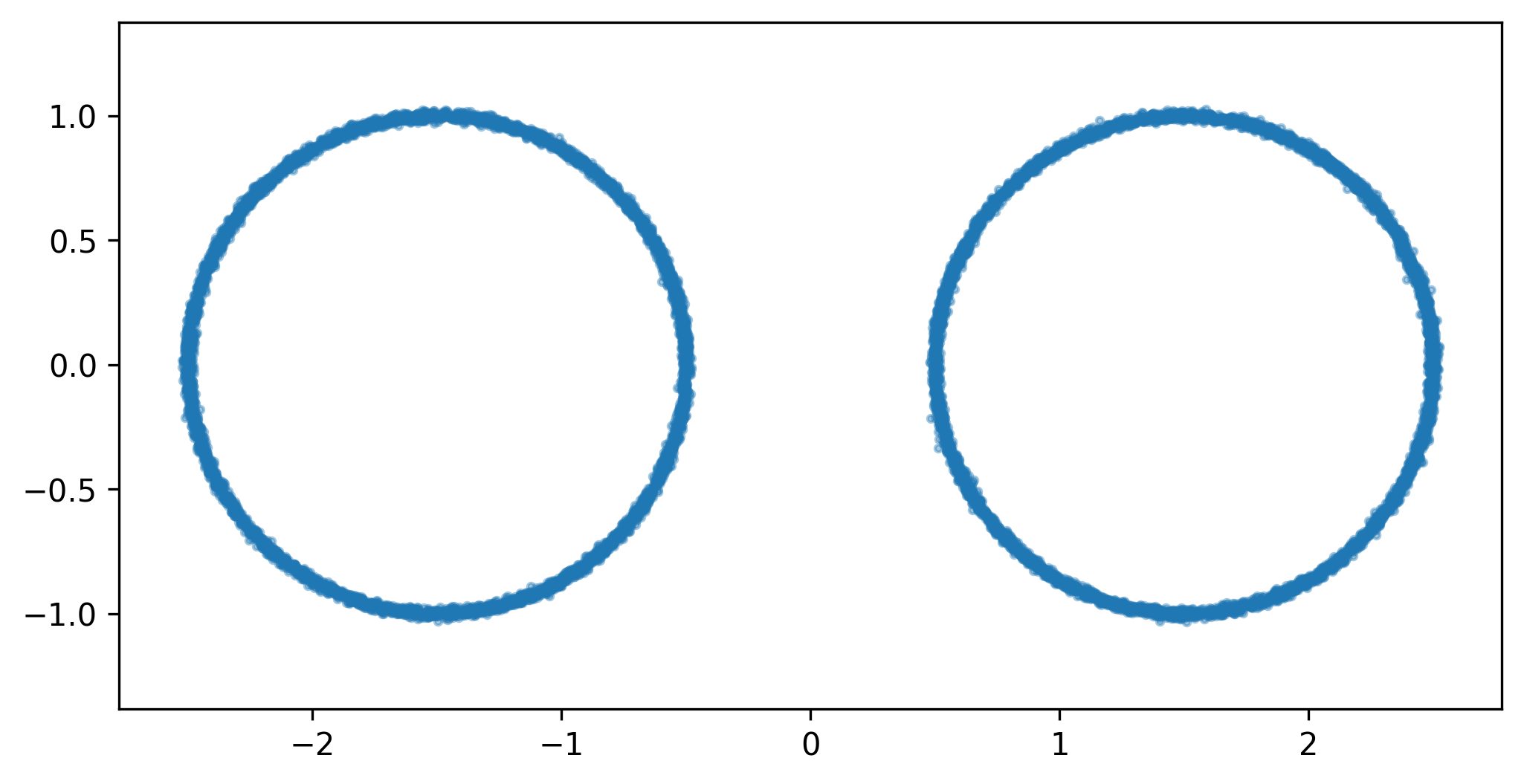}
        \caption{}
    \end{subfigure}
    \hfill
    \begin{subfigure}[b]{0.23\textwidth}
        \centering
        \includegraphics[width=\textwidth]{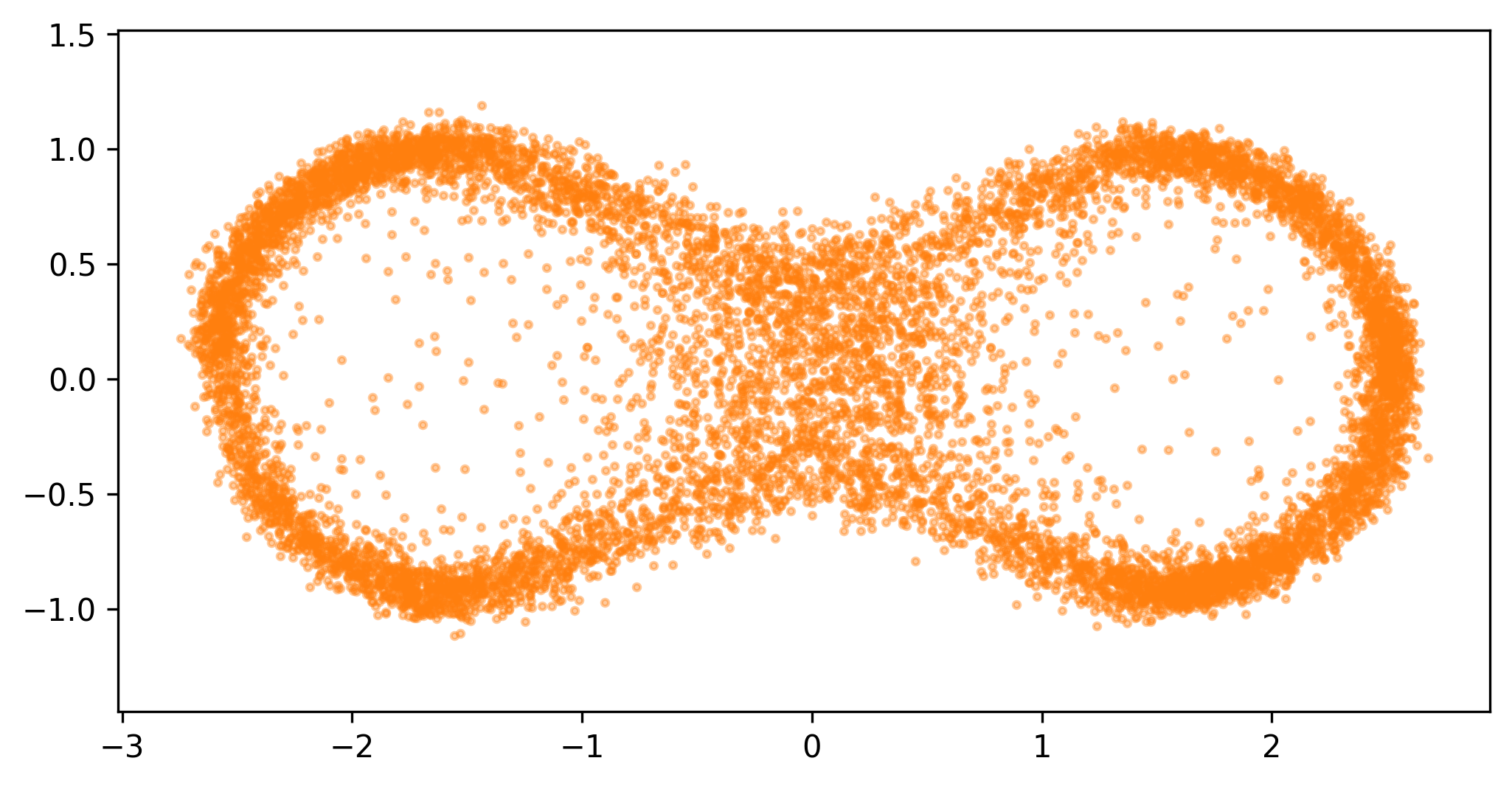}
        \caption{}
    \end{subfigure}
    \hfill
    \begin{subfigure}[b]{0.23\textwidth}
        \centering
        \includegraphics[width=\textwidth]{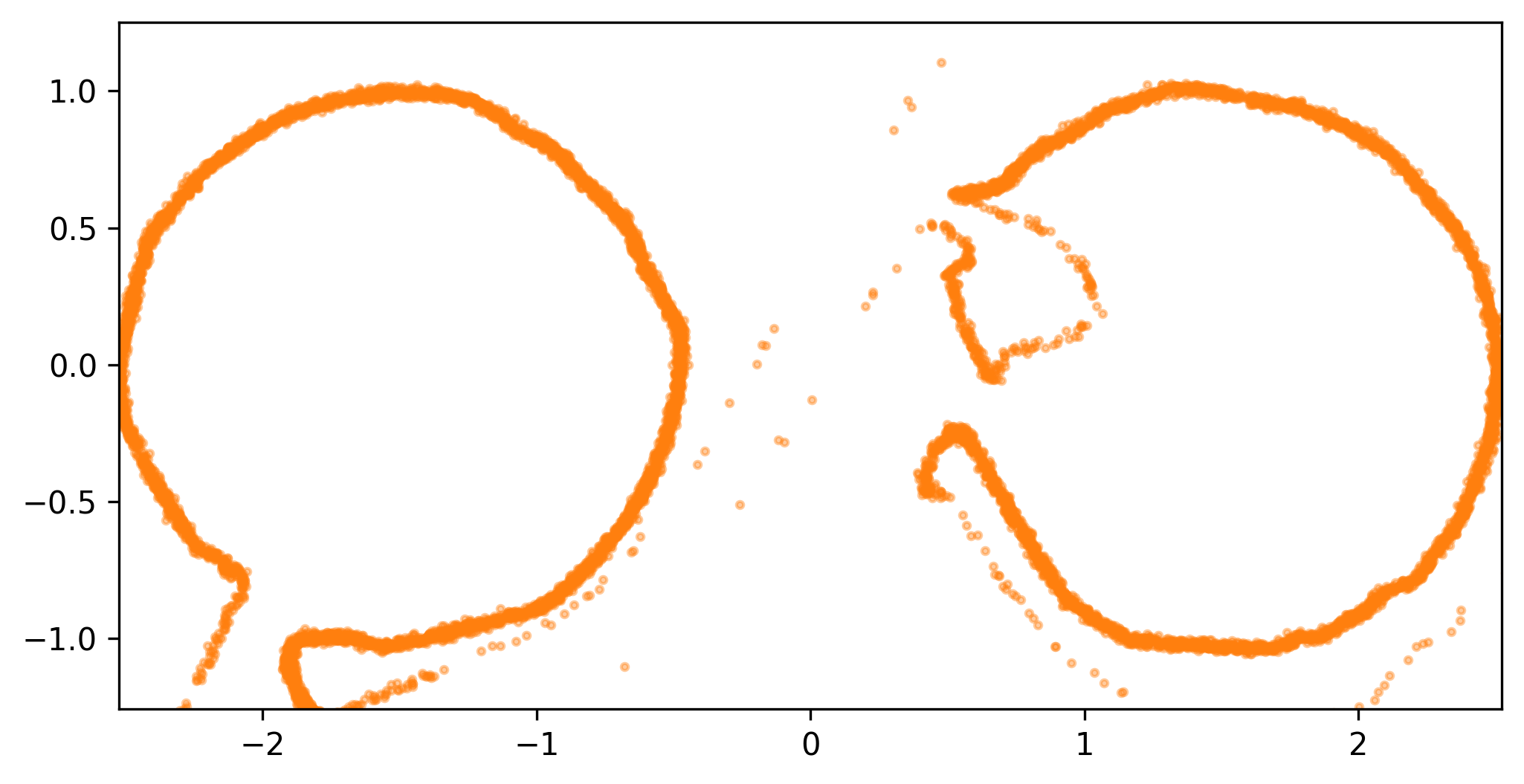}
        \caption{}
    \end{subfigure}
    \hfill
    \begin{subfigure}[b]{0.23\textwidth}
        \centering
        \includegraphics[width=\textwidth]{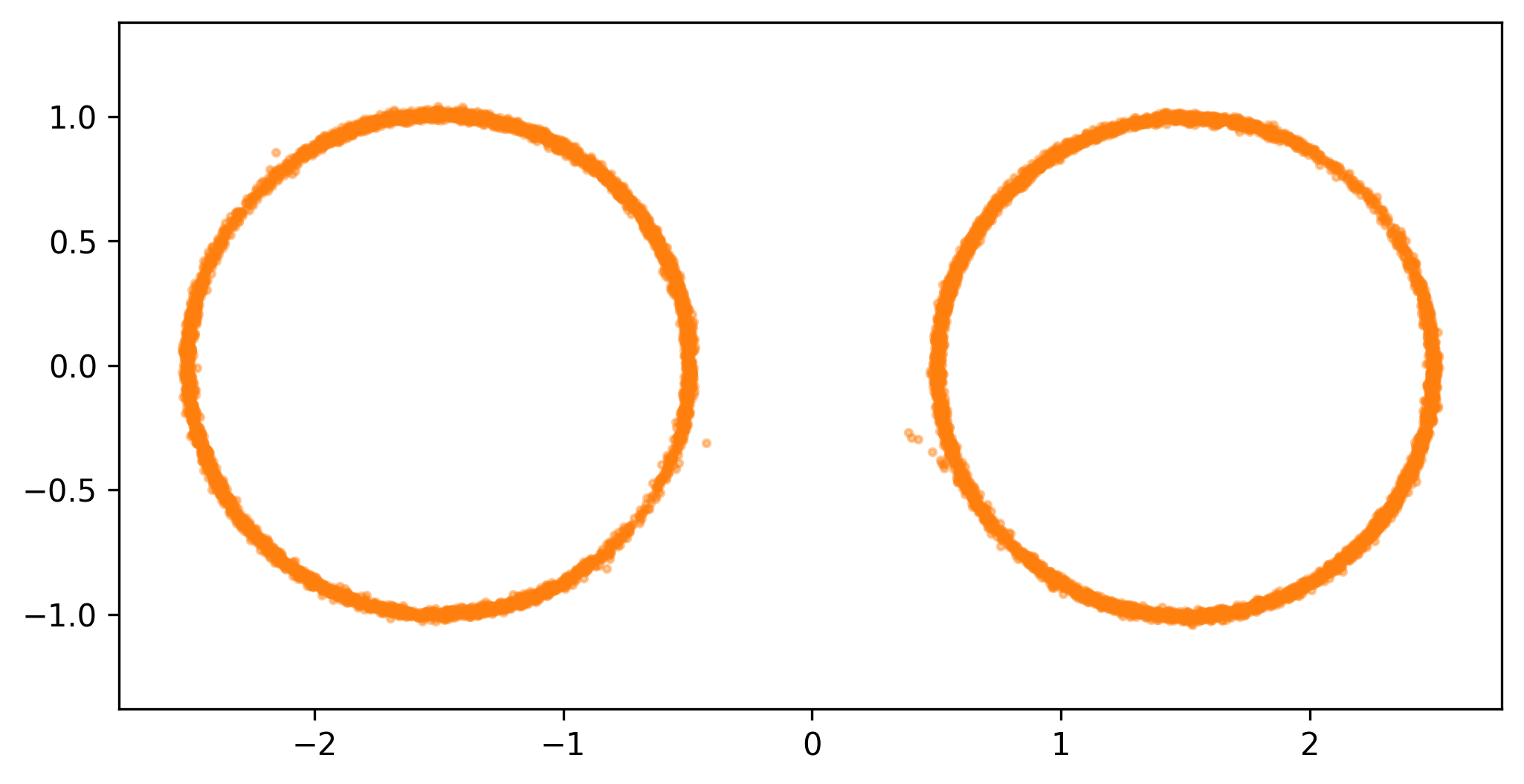}
        \caption{}
    \end{subfigure}

    \caption{Multi-chart formulation improves the quality of generative models. (a) Original data; (b) Diffusion in original space; (c) Latent diffusion with 1 chart; (d) Latent diffusion with 4 charts. }
    \label{fig:02}
\end{figure}

As illustrated in Figure~\ref{fig:02}, the multi-chart formulation greatly increases the flexibility and expressiveness of the model, leading to improved performance in downstream generative tasks. 

See Appendix~\ref{manifold-learning} for more details on the procedure.

\textbf{Diffusion Models}. Diffusion models \citep{ho2020denoising, austin2021structured} have achieved tremendous success in data generation over the past few years. Although models obtained via manifold learning already provide a primitive mechanism for generating new samples, directly sampling from VAEs is often suboptimal \resp{since the simple prior $p(\bm{z})$ specified for the training of VAE usually fails to generalize to regions unseen from the dataset. In practice, it is more effective to further refine the latent structure using a diffusion process \citep{rombach2022high}.}

While latent diffusion models are not a new concept, they are typically restricted to a single-VAE framework and do not naturally accommodate manifold-structured data, as illustrated in Figure~\ref{fig:02}. To enable joint generation from $p(\bm{z}, c)$ despite its intractable form, we introduce a joint diffusion process that simultaneously updates continuous variables (corresponding to $\bm{z}$) and discrete variables (corresponding to $c$); which will be introduced in Section~\ref{joint-diffusion}. The proposed latent joint diffusion process not only significantly improves generation quality for data with underlying manifold structure, but also enhances the proposed imputation procedure.

\section{Imputation under Manifold Hypothesis}\label{design}

We present our procedure in this section. Starting with a dataset $\{\bm{x}_k\}_{k=1}^n \subseteq \mathbb{R}^p$ satisfying the manifold hypothesis, we assume the underlying manifold has dimension $d$, which is known (e.g., estimated via a manifold dimension estimator like TwoNN). We first describe the Bayesian procedure for missing data imputation, followed by the construction of a joint diffusion process in the latent space that further enhances it.

\subsection{Imputation via Sampling-Importance-Resampling}

Our target is to sample from \( p(\bm{x}_{\mathrm{mis}} \mid \bm{x}_{\mathrm{obs}}) \) for imputation. To achieve that, we need to first obtain a sample for $\bm{z}$ and $c$. Since 
\begin{equation}
p(\bm{z}, c \mid \bm{x}) = p(\bm{z} \mid \bm{x}, c)p(c \mid \bm{x}),
\end{equation}
and both $p(\bm{z} \mid \bm{x}, c)$ and $p(c \mid \bm{x})$ are available from the manifold learning procedure, we can sample a chart assignment $c \sim p(c \mid \bm{x})$ for each data point $\bm{x}$, and then draw $\bm{z} \sim p(\bm{z} \mid \bm{x}, c)$. This yields a collection of joint latent samples $\{(\bm{z}_k, c_k)\}_{k=1}^n$.

Introducing the latent variables into the conditional density we have
\begin{align}
    p(\bm{x}_\mathrm{mis} \mid \bm{x}_\mathrm{obs}) 
    &= \sum_c \int p(\bm{x}_\mathrm{mis}, \bm{z} \mid \bm{x}_\mathrm{obs},  c)p(c \mid \bm{x}_\mathrm{obs}) d\bm{z}\nonumber\\
    &= \sum_c \int p(\bm{x}_\mathrm{mis} \mid \bm{x}_\mathrm{obs}, \bm{z}, c)p(\bm{z} \mid c, \bm{x}_\mathrm{obs})p(c \mid \bm{x}_\mathrm{obs})d\bm{z}\nonumber\\
    &\doteq \sum_c \int p(\bm{x}_\mathrm{mis} \mid \bm{z}, c)p(\bm{z} \mid c, \bm{x}_\mathrm{obs})p(c \mid \bm{x}_\mathrm{obs})d\bm{z},
\end{align}
where for the last equality we use the fact that with the normal modeling of $p(\bm{x}\mid \bm{z}, c)$ (Equation~\ref{eqn-conds}), $\bm{x}_\mathrm{obs}$ and $\bm{x}_\mathrm{mis}$ are conditionally independent. Combining $p(\bm{z} \mid c, \bm{x}_\mathrm{obs})$ and $p(c \mid \bm{x}_\mathrm{obs})$ yields $p(\bm{z}, c \mid \bm{x}_\mathrm{obs})$, and we obtain
\begin{align}\label{eqn-sampling}
    p(\bm{x}_\mathrm{mis} \mid \bm{x}_\mathrm{obs}) 
    &= \sum_c \int p(\bm{x}_\mathrm{mis} \mid \bm{z}, c)p(\bm{z}, c \mid \bm{x}_\mathrm{obs})d\bm{z}.
\end{align}

According to Equation~\ref{eqn-sampling}, sampling from \(p(\bm{x}_\mathrm{mis} \mid \bm{x}_\mathrm{obs})\) can be decomposed into sampling from \(p(\bm{z}, c \mid \bm{x}_\mathrm{obs})\) first, then sampling from \(p(\bm{x}_\mathrm{mis} \mid \bm{z}, c)\), which is Gaussian since it marginalize a joint Gaussian. However, \(p(\bm{z}, c \mid \bm{x}_\mathrm{obs})\) is not tractable, hence we seek for the alternative SIR procedure described below to sample from it.

\begin{algorithm}[ht]
\caption{Sampling–Importance–Resampling (SIR)}
\label{alg:sir}

\begin{algorithmic}[1]

\STATE Input: $\bm{x}_\mathrm{obs}$, $p(\bm{x}_\mathrm{obs}\mid \bm{z}, c)$, $\{(\bm{z}_k, c_k)\}_{k=1}^n$
\STATE Compute weights $w_k = p(\bm{x}_\mathrm{obs}\mid \bm{z}_k, c_k)$.
\STATE Normalize weights $\tilde{w}_k = w_k/\sum_k w_k$
\STATE Sample $(\bm{z}^*, c^*)$ from $\{(\bm{z}_k, c_k)\}_{k=1}^n$ according to $\tilde{w}_k$.
\STATE Output: $(\bm{z}^*, c^*) \mathrel{\overset{\cdot}{\sim}} p(\bm{z}, c \mid \bm{x}_\mathrm{obs})$
\end{algorithmic}
\end{algorithm}

The above algorithm provides an asymptotically exact sample from $p(\bm{z}, c \mid \bm{x}_\mathrm{obs})$ as $n \to \infty$; see Appendix~\ref{SIR-proof} for a proof of the following result.

\begin{theorem}\label{thm:01}
Assume all learned models are exact and $\{(\bm{z}_k, c_k)\}_{k=1}^n$ is a random sample of $p(\bm{z}, c)$, then the pair $(\bm{z}^*, c^*)$ produced by Algorithm~\ref{alg:sir} converges weakly (in distribution) to $p(\bm{z}, c \mid \bm{x}_\mathrm{obs})$ as $n \to \infty$.
\end{theorem}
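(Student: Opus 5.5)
The strategy is the standard argument for sampling–importance–resampling. For a bounded continuous test function $f$ on $\sR^d\times\{1,\dots,C\}$, I will write the law of the output, conditional on the sample, as a ratio of two empirical averages. I will then apply the strong law of large numbers to the numerator and the denominator separately. Set $w(\bm{z},c)=p(\bm{x}_\mathrm{obs}\mid\bm{z},c)$, so that $w_k=w(\bm{z}_k,c_k)$. By construction of Algorithm~\ref{alg:sir},
\begin{equation*}
\mathbb{E}\bigl[f(\bm{z}^*,c^*)\mid \{(\bm{z}_k,c_k)\}_{k=1}^n\bigr]=\frac{\frac1n\sum_{k=1}^n f(\bm{z}_k,c_k)\,w(\bm{z}_k,c_k)}{\frac1n\sum_{k=1}^n w(\bm{z}_k,c_k)}.
\end{equation*}

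First I handle integrability, using the exactness assumption. Since the learned models are exact, $p(\bm{x}\mid\bm{z},c)$ is the Gaussian of Equation~\ref{eqn-conds}. Its marginal on the observed coordinates is also Gaussian, so $0< w(\bm{z},c)\le (2\pi\sigma_x^2)^{-|\mathrm{obs}|/2}$, and $w$ is bounded. Both $fw$ and $w$ are therefore integrable under $p(\bm{z},c)$. By the strong law of large numbers for the i.i.d.\ sample, almost surely
\begin{equation*}
\frac1n\sum_{k=1}^n f w(\bm{z}_k,c_k)\to \sum_c\int f(\bm{z},c)\,p(\bm{x}_\mathrm{obs}\mid\bm{z},c)\,p(\bm{z},c)\,d\bm{z},
\qquad
\frac1n\sum_{k=1}^n w(\bm{z}_k,c_k)\to p(\bm{x}_\mathrm{obs}).
\end{equation*}
Here the second limit is the marginal likelihood, which is strictly positive because $w>0$ everywhere. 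The ratio then converges almost surely to $\sum_c\int f(\bm{z},c)\,p(\bm{z},c\mid\bm{x}_\mathrm{obs})\,d\bm{z}$ by Bayes' rule, $p(\bm{z},c\mid\bm{x}_\mathrm{obs})=p(\bm{x}_\mathrm{obs}\mid\bm{z},c)p(\bm{z},c)/p(\bm{x}_\mathrm{obs})$. This Bayes step is where exactness of the learned decoder and prior is used.

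To pass to the unconditional law of $(\bm{z}^*,c^*)$, I take expectations over the sample. The ratio is bounded by $\|f\|_\infty$, so dominated convergence gives $\mathbb{E}f(\bm{z}^*,c^*)\to\mathbb{E}_{p(\bm{z},c\mid\bm{x}_\mathrm{obs})}f$. Since this holds for every bounded continuous $f$, the output converges weakly, which is the statement of Theorem~\ref{thm:01}. The argument in fact gives more: almost surely, the conditional law of $(\bm{z}^*,c^*)$ given the sample converges weakly. To upgrade the conclusion from each fixed $f$ to all $f$ simultaneously on one probability-one event, I would use a countable convergence-determining class on the Polish space $\sR^d\times\{1,\dots,C\}$.

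The main obstacle is justifying the ratio limit, namely integrability of $w$ and positivity of the denominator's limit. In the stated Gaussian setting both follow from the explicit form of the decoder. If I wanted to drop the Gaussian assumption, I would instead assume $p(\bm{x}_\mathrm{obs})>0$ and $\mathbb{E}[w]<\infty$. The latter holds automatically, since $\mathbb{E}[w]=p(\bm{x}_\mathrm{obs})$, and $|fw|\le\|f\|_\infty w$ then gives integrability of $fw$, so the same argument goes through.
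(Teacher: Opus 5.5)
Your proposal is correct and follows the paper's argument essentially step for step. You write the resampling expectation of a bounded continuous $f$ as a ratio of empirical averages, use the boundedness of the Gaussian likelihood $p(\bm{x}_\mathrm{obs}\mid\bm{z},c)$ to apply the strong law to the numerator and the denominator, and identify the limit by Bayes' rule. Your extra steps make explicit what the paper leaves implicit when it says that convergence of the resampling distribution $p_{\tilde{w}}$ is "equivalent" to convergence of the output: the strictly positive limiting denominator, the bounded-convergence passage to the unconditional law of $(\bm{z}^*,c^*)$, and the countable convergence-determining class for almost-sure weak convergence.
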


\subsection{Joint Latent Diffusion Process}\label{joint-diffusion}

\resp{According to Theorem~\ref{thm:01}, a larger joint latent sample 
$\{(\bm{z}_k,c_k)\}_{k=1}^n$ can lead to improved imputation quality. 
However, when the joint latent samples are obtained directly from the observed training data, their size is inherently limited by the number of available observations. Consequently, the resulting imputation procedure is constrained by the support of the observed training sample and may fail to adequately explore or represent previously unseen regions of the data space. This limitation motivates the use of a joint diffusion process in the latent space, which enables efficient sampling from $p(\bm{z}, c)$ at arbitrary scale. The proposed model not only improves imputation quality but also provides a principled framework for data generation under the manifold hypothesis, as demonstrated in Figure~\ref{fig:03}.}

\begin{figure}[H]
    \centering
    \begin{subfigure}[b]{0.09\textwidth}
        \centering
        \includegraphics[width=\textwidth]{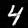}
    \end{subfigure}
    \hfill
    \begin{subfigure}[b]{0.09\textwidth}
        \centering
        \includegraphics[width=\textwidth]{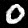}
    \end{subfigure}
    \hfill
    \begin{subfigure}[b]{0.09\textwidth}
        \centering
        \includegraphics[width=\textwidth]{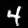}
    \end{subfigure}
    \hfill
    \begin{subfigure}[b]{0.09\textwidth}
        \centering
        \includegraphics[width=\textwidth]{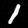}
    \end{subfigure}
    \hfill
    \begin{subfigure}[b]{0.09\textwidth}
        \centering
        \includegraphics[width=\textwidth]{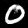}
    \end{subfigure}
    \hfill
    \begin{subfigure}[b]{0.09\textwidth}
        \centering
        \includegraphics[width=\textwidth]{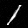}
    \end{subfigure}
    \hfill
    \begin{subfigure}[b]{0.09\textwidth}
        \centering
        \includegraphics[width=\textwidth]{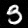}
    \end{subfigure}
    \hfill
    \begin{subfigure}[b]{0.09\textwidth}
        \centering
        \includegraphics[width=\textwidth]{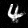}
    \end{subfigure}
    \hfill
    \begin{subfigure}[b]{0.09\textwidth}
        \centering
        \includegraphics[width=\textwidth]{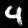}
    \end{subfigure}
    \hfill
    \begin{subfigure}[b]{0.09\textwidth}
        \centering
        \includegraphics[width=\textwidth]{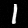}
    \end{subfigure}
    \caption{\resp{Generated MNIST digits using a joint latent diffusion process. The latent space has dimension $d=15$ (compared with the original dimension $p=784$). Four charts are used, each of which naturally corresponds to one digit (only digits 0, 1, 3, and 4 are used for training).}}
    \label{fig:03}
\end{figure}

\resp{A diffusion process is defined by a forward Markov process that gradually adds noise to the data and a reverse Markov process that gradually removes it.} For the forward process, we corrupt $\bm{z}$ and $c$ independently. Specifically,
\begin{equation}
    p(\bm{z}_t, c_t \mid \bm{z}_{t-1}, c_{t-1})
    =
    p(\bm{z}_t \mid \bm{z}_{t-1})
    \, p(c_t \mid c_{t-1}),
\end{equation}
where
\begin{equation}
    \bm{z}_t
    =
    \sqrt{\alpha_t}\,\bm{z}_{t-1}
    +
    \sqrt{1-\alpha_t}\,\bm{\epsilon}_t,
    \qquad
    \bm{\epsilon}_t \sim \mathcal{N}(\bm{0}, I_d),
\end{equation}
and
\begin{equation}
    p(c_t = j \mid c_{t-1} = i)
    =
    Q_t(i,j),
\end{equation}
That is, $\bm{z}_t$ and $c_t$ are corrupted separately through diffusion processes defined on continuous and discrete variables, respectively, with prescribed noise schedules $\{\alpha_t\}_{t=1}^T$ and $\{Q_t\}_{t=1}^T$.

It is important to emphasize that this factorization applies only to the forward corruption process. In particular, we do not assume that $\bm{z}_0$ and $c_0$ are independent under the data distribution, nor does the above construction imply an analogous factorization for the reverse process. 

The one-step factorization of the forward process immediately implies the corresponding $t$-step factorization
\begin{equation}
    p(\bm{z}_t, c_t \mid \bm{z}_0, c_0)
    =
    p(\bm{z}_t \mid \bm{z}_0)\,
    p(c_t \mid c_0),
\end{equation}
which further yields
\begin{equation}
    p(\bm{z}_{t-1}, c_{t-1}
    \mid
    \bm{z}_t, c_t, \bm{z}_0, c_0)
    =
    p(\bm{z}_{t-1} \mid \bm{z}_t, \bm{z}_0)\,
    p(c_{t-1} \mid c_t, c_0).
\end{equation}
Consequently, the posterior
$
p(\bm{z}_{t-1}, c_{t-1}
\mid
\bm{z}_t, c_t, \bm{z}_0, c_0)
$
is fully specified by the forward process and can be computed exactly.

However, the marginal reverse transition
\begin{equation}
    p(\bm{z}_{t-1}, c_{t-1}
    \mid
    \bm{z}_t, c_t)
\end{equation}
does not admit such a decomposition in general. As a result, directly minimizing the Kullback--Leibler divergence between
$
p(\bm{z}_{t-1}, c_{t-1}
\mid
\bm{z}_t, c_t)
$
and
$
p(\bm{z}_{t-1}, c_{t-1}
\mid
\bm{z}_t, c_t, \bm{z}_0, c_0)
$
is intractable.

To address this issue, we follow the standard diffusion-model paradigm and train a neural network to predict the clean latent variables $(\bm{z}_0,c_0)$ directly from $(\bm{z}_t,c_t,t)$. The network therefore serves as an implicit approximation of the reverse dynamics. The resulting training objective consists of a mean-squared-error loss for $\bm{z}_0$ and a cross-entropy loss for $c_0$. Additional implementation details are provided in Appendix~\ref{implementation}.

When sampling, we make use of the following decomposition:
\begin{align}
    p(\bm{z}_{t-1}, c_{t-1} \mid \bm{z}_t, c_t)
    = \sum_{c_0} \int p(\bm{z}_0, c_0 \mid \bm{z}_t, c_t)\,
    p(\bm{z}_{t-1} \mid \bm{z}_t, \bm{z}_0)\,
    p(c_{t-1} \mid c_t, c_0)\, d\bm{z}_0.
\end{align}
This suggests the following one-step reverse procedure: we first draw estimates of $(\bm{z}_0, c_0)$ from the learned model conditioned on $(\bm{z}_t, c_t)$, and then sample $\bm{z}_{t-1}$ and $c_{t-1}$ from the known forward-process posteriors,
\[
p(\bm{z}_{t-1} \mid \bm{z}_t, \bm{z}_0)
\quad \text{and} \quad
p(c_{t-1} \mid c_t, c_0).
\]

\section{Experiments}\label{experiments}

In this section, we first discuss several possible approaches for evaluating imputation quality. We then present experimental results across a wide range of synthetic and real-world datasets. The results show that our proposed method achieves performance comparable to the strongest baseline, MissForest, and outperforms it under high missing rates or complex missing mechanisms.

\subsection{Measurement of Imputation Quality}

The most commonly used method to evaluate imputation quality is root mean squared error (RMSE), which is the sample version of 
\begin{equation}
    \sqrt{\mathbb{E}\left[\|\hat{\bm{x}}_{\mathrm{mis}} - \bm{x}_{\mathrm{mis}}\|_2^2\right]/(p -\|\bm{m}\|_1)},
\end{equation}
in which different variables need to be standardized first before computing. However, this method only focuses on pointwise accuracy and may fail to capture the alignment with the underlying geometry and data distribution. For example, the imputation in Figure~\ref{fig:01b} produced by MissForest has a lower RMSE ($1.5650$) than that in Figure~\ref{fig:01c} produced by our proposed method ($1.8476$), despite the fact that many of its imputed values are clearly misplaced. 

An alternative measure that incorporates both geometry and distributional information is the Wasserstein distance. Let $\mu$ and $\nu$ denote the empirical distributions of the true and imputed data, respectively, the Wasserstein distance is defined as:
\begin{equation}
W_2(\mu, \nu)
= \inf_{\gamma \in \Gamma(\mu, \nu)}
\left(
\mathbb{E}_{(\bm{x}, \bm{y}) \sim \gamma}
\left[ \|\bm{x} - \bm{y}\|_2^2 \right]
\right)^{1/2},
\end{equation}
where $\Gamma(\mu, \nu)$ denotes the set of all \resp{possible joint distributions} between $\mu$ and $\nu$. Ideally, \resp{the norm used} should be defined on the underlying manifold supporting the data distribution; however, this is generally intractable in practice, so we instead use the Euclidean \resp{distance} in $\mathbb{R}^p$. For Figure~\ref{fig:01b} and Figure~\ref{fig:01c}, the Wasserstein distances are $0.3972$ and $0.2753$, respectively.

Sometimes we may wish to directly evaluate model performance on downstream tasks trained using the imputed data, and compare it with models trained on the original complete data. However, the results can vary substantially depending on the choice of model, and in some cases, we may even observe that models trained on imputed data perform better. Data imputation seeks to fill missing entries in a way that is consistent with the main structure of the data; however, there is no guarantee that this structure is relevant to the downstream task.

Besides dataset-specific evaluation metrics, certain methods may also be preferred from a methodological perspective. For instance, inductive methods are often desirable when online imputation is required.  Moreover, since our proposed approach enables direct sampling from $p(\bm{x}_\mathrm{mis} \mid \bm{x}_\mathrm{obs})$, it further allows us to quantify uncertainty rather than merely producing point estimates.

\subsection{Results}

We compare our proposed method with seven competing methods, including mean imputation, PCA imputation, and $k$-nearest neighbor imputation, which are commonly used baselines; MICE and MissForest, which are widely regarded as the strongest classical alternatives; and GAIN and VAE, which are generative-model-based imputation methods similar in spirit to ours.

We first present results on three simulated datasets supported on two circles, a sphere, and a torus, respectively. For the imputation task, each data point has between $1$ and $p-1$ components missing at random. We then evaluate performance on $6$ real-world datasets, all of which have their ambient dimension $p$ larger than the intrinsic dimension $d$ according to manifold dimension estimators, suggesting an underlying manifold structure.

We split each dataset into training and test sets using a $50\%$--$50\%$ ratio, and evaluate different missing rates on the test set. Missing rates are reported as a fraction of the entire dataset size, under the MCAR, MAR, and MNAR mechanisms. For example, a missing rate of $45\%$ corresponds to a $90\%$ missing rate within the test set. To ensure fairness, transductive methods such as MissForest are allowed to operate on the combined dataset, although this provides them with an advantage from an evaluation standpoint.

\begin{table}[htbp]
\centering
\small
\setlength{\tabcolsep}{6pt}
\renewcommand{\arraystretch}{1.25}

\begin{tabular}{lcc|cc|cc}
\toprule

& \multicolumn{2}{c|}{\textbf{Two Circles}}
& \multicolumn{2}{c|}{\textbf{Sphere}}
& \multicolumn{2}{c}{\textbf{Torus}} \\

\cmidrule(lr){2-3} \cmidrule(lr){4-5} \cmidrule(lr){6-7}

\textbf{Method}
& \textbf{RMSE} & \textbf{Wass.}
& \textbf{RMSE} & \textbf{Wass.}
& \textbf{RMSE} & \textbf{Wass.} \\

\midrule

Mean        & 1.2622 & 0.9322 & \textbf{0.5682} & 0.6675 & 1.8143 & 1.1908 \\
PCA         & 1.7673 & 0.8842 & 0.7641 & 0.7612 & 2.4583 & 1.1991 \\
KNN         & 1.2627 & 0.8859 & 0.5722 & 0.6450 & 1.8216 & 1.1451 \\
MICE        & \textbf{1.2395} & 0.6626 & 0.5689 & 0.5460 & 1.7940 & 0.9645 \\
MissForest  & 1.5650 & 0.3972 & 0.6400 & 0.4201 & 1.9585 & 0.7681 \\
GAIN        & 1.2625 & 0.9322 & 0.6152 & 0.6734 & 1.9817 & 1.1940 \\
VAE         & 1.4464 & 0.7850 & 0.8045 & 0.5239 & \textbf{1.2836} & 0.9101 \\
Proposal    & 1.8476 & \textbf{0.2753} & 0.8153 & \textbf{0.2664} & 2.6216 & \textbf{0.4618} \\

\bottomrule
\end{tabular}

\caption{Comparison of imputation methods on synthetic datasets using RMSE and Wasserstein distance. Best results are shown in bold.}
\label{tab:synthetic_results}
\end{table}

We observe from Table~\ref{tab:synthetic_results} that our proposed method achieves the lowest Wasserstein distance across the synthetic datasets, indicating that its imputations best preserve the underlying geometry and data distribution. We also note that, although RMSE is a widely used metric for evaluating imputation quality, it appears to be misleading on these datasets. In particular, methods with lower Wasserstein distances tend to exhibit larger RMSEs. We attribute this to two reasons. First, RMSE evaluates the pointwise error of individual entries and does not account for the geometry or overall distribution of the imputed data. Second, all three datasets are perfectly symmetric around the origin. Consequently, a missing entry may have several equally likely values under the conditional distribution from which our method samples. When the sampled value differs from the particular realization in the ground-truth dataset, RMSE penalizes the imputation heavily, even though the sampled value is consistent with the underlying data-generating mechanism.

Next, we present imputation results on six real-world datasets, evaluated using RMSE and Wasserstein distance. Information on the datasets are provided in Appendix~\ref{implementation}. Detailed results are reported in Tables~\ref{tab:superconductivity:rmse}--\ref{tab:facialexpression:wass} in Appendix~\ref{results}. Visual illustration of the results on the superconductivity dataset are presented in Figure~\ref{fig:05} and Figure~\ref{fig:06}. 

We can see from Figure~\ref{fig:05} that our method achieves the lowest RMSE and Wasserstein distance across all three missing mechanisms among the eight methods, followed by MissForest. A more detailed comparison of the two methods across different missing rates in Figure~\ref{fig:06} reveals that, for the MCAR and MAR cases, MissForest performs better at lower missing rates but is overtaken by our method as the missing rate increases, in terms of both RMSE and Wasserstein distance. In the MNAR case, however, our method outperforms MissForest across almost the entire range of missing rates.

\begin{figure}[H]
    \centering
    \begin{subfigure}[b]{\textwidth}
        \centering
        \includegraphics[width=0.75\textwidth]{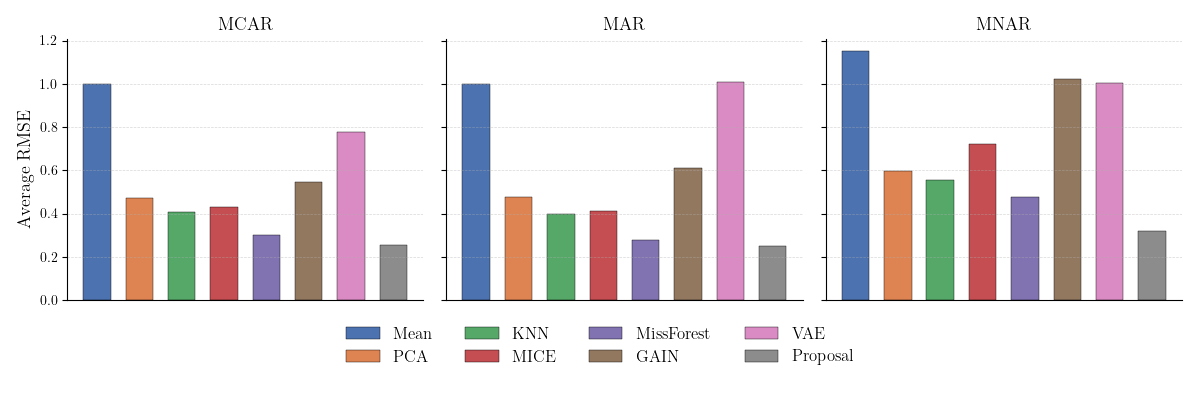}
        \caption{Average RMSE.}
        \label{fig:05a1}
    \end{subfigure}
    \\
    \begin{subfigure}[b]{\textwidth}
        \centering
        \includegraphics[width=0.75\textwidth]{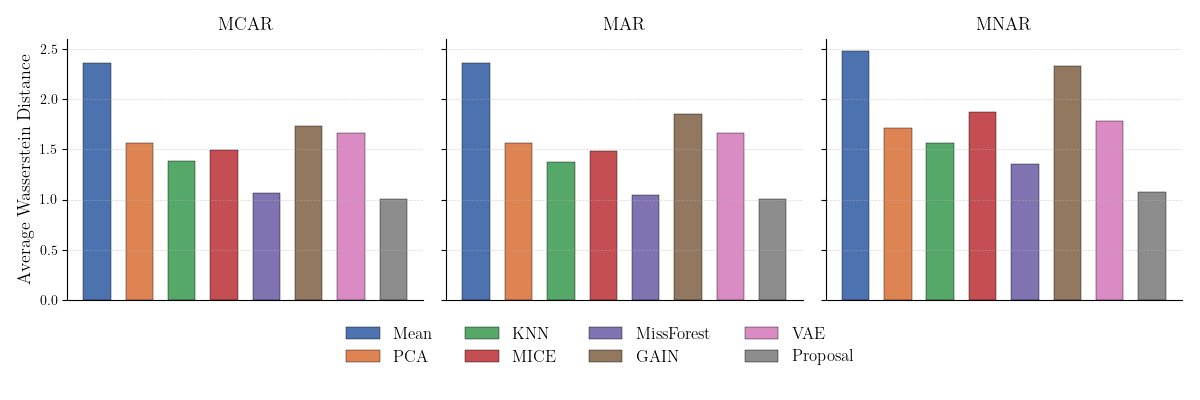}
        \caption{Average Wasserstein distance.}
        \label{fig:05a2}
    \end{subfigure}
    \caption{Average evaluation results over all missing rates, for each method.}
    \label{fig:05}
\end{figure}

\begin{figure}[H]
    \centering
    \begin{subfigure}[b]{\textwidth}
        \centering
        \includegraphics[width=0.75\textwidth]{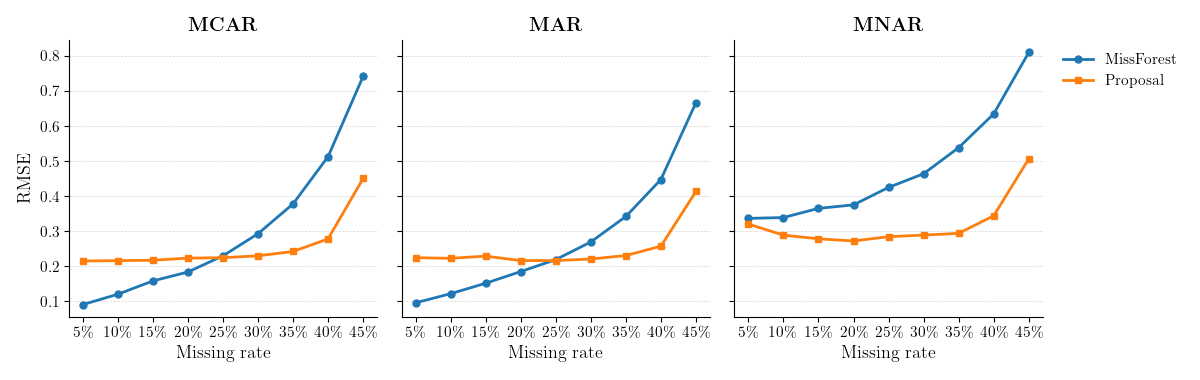}
        \caption{RMSE.}
        \label{fig:05b1}
    \end{subfigure}
    \\
    \begin{subfigure}[b]{\textwidth}
        \centering
        \includegraphics[width=0.75\textwidth]{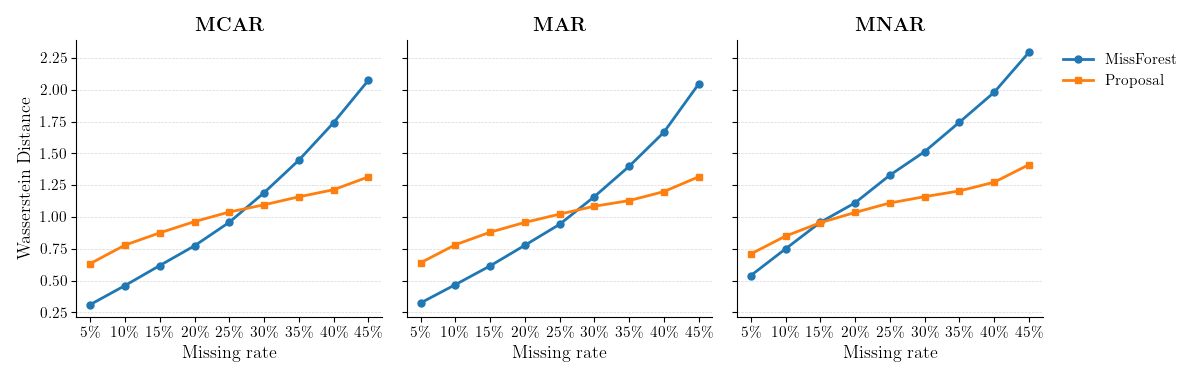}
        \caption{Wasserstein distance.}
        \label{fig:05b2}
    \end{subfigure}
    \caption{Evaluation results across different missing rates, for our method and MissForest.}
    \label{fig:06}
\end{figure}

Similar results can be observed on the energy, finance, and facial expression datasets. For the facial expression dataset, however, the strongest competing method is MICE rather than MissForest, suggesting that the underlying data structure may be closer to a linear setting in this case. On the powerplant dataset, although our method produces relatively large RMSE values, it achieves the smallest Wasserstein distances in most cases, suggesting that this dataset may exhibit a similar phenomenon to that observed in the simulated datasets. In contrast, our method does not perform well on the wine dataset, except for achieving competitive Wasserstein distances under some MNAR settings. This may be due either to the small sample size, which is insufficient to learn a meaningful structure, or to the manifold hypothesis being less appropriate for this dataset. Notably, MissForest also struggles with this dataset, whereas KNN achieves the best overall performance, suggesting that local similarity information may be more effective in this setting.

Given that our method, like many existing imputation methods, is developed under the MCAR assumption, its strong performance under MNAR settings is particularly noteworthy. A possible explanation is that modeling the underlying geometry effectively transfers part of the burden of handling missingness from the missingness mechanism to the data structure itself. Rather than relying solely on assumptions about how entries become missing, our method exploits the relationship among variables (which manifest itself as manifold geometry) to constrain the space of plausible imputations. As a result, the learned structure may remain informative even when the assumed missingness mechanism is violated, reducing the method's sensitivity to the specific mechanism generating the missing entries.

To further evaluate imputation quality from a downstream prediction perspective, we consider the superconductivity dataset, for which we fit a linear regression model using the imputed data, and evaluate its predictive performance using RMSE. The response variable is excluded from the imputation process. Detailed results are reported in Table~\ref{tab:superconductivity:pred}, while the average RMSE across different missing rates under each missing mechanism is presented in Table~\ref{tab:combined_rmse}. Overall, our proposed method achieves the second-best predictive performance under MCAR and MAR, and the best under MNAR.

\begin{table}[H]
\centering
\small
\setlength{\tabcolsep}{5pt}
\renewcommand{\arraystretch}{1.2}

\begin{tabular}{lcccccccc}
\toprule
\textbf{Method} & Mean & PCA & KNN & MICE & MissForest & GAIN & VAE & Proposal \\
\midrule

MCAR & 3.5033 & 1.0740 & 0.3014 & 0.5287 & \textbf{0.2768} & 0.3487 & 0.3399 & 0.2815 \\
MAR  & 3.3649 & 0.6679 & 0.3018 & 0.6091 & \textbf{0.2750} & 0.3986 & 0.3422 & 0.2813 \\
MNAR & 4.1230 & 1.0871 & 0.3041 & 0.7036 & 0.2876 & 0.7967 & 0.4911 & \textbf{0.2807} \\
\bottomrule

\end{tabular}

\caption{Average prediction RMSE under different missing mechanisms on superconductivity dataset. Best results are highlighted in bold.}
\label{tab:combined_rmse}
\end{table}

\section{Conclusion}\label{conclusion}

We have proposed a missing data imputation procedure for data satisfying the manifold hypothesis. The proposed approach is inductive by construction and naturally quantifies imputation uncertainty through probabilistic sampling. Extensive experiments on both synthetic and real-world datasets demonstrate that it is highly competitive with state-of-the-art imputation methods.

Furthermore, by treating $\bm{x}_{\mathrm{mis}}$ as the response variable, the proposed framework can be readily adapted to a nonparametric regression model that accommodates nonlinear relationships in a novel yet natural way. Finally, our current approach samples from $p(\bm{z}, c)$ unconditionally. In future work, $\bm{x}_{\mathrm{obs}}$ could be incorporated as guidance for the diffusion process, allowing it to generate latent samples that are more relevant to the observed data and potentially further improve imputation performance, particularly when the missing rate is low.

\section{Limitation}

Our proposed method relies on the assumption that the data satisfies the manifold hypothesis to a reasonable extent, which is difficult to evaluate in practice. Moreover, since the manifold hypothesis inherently presumes smooth structure, the method is primarily applicable to continuous variables and is not directly suited to discrete data. In addition, learning a reliable imputation model requires a sufficient number of fully observed cases for training. This requirement may be restrictive in settings where complete cases are scarce, even when the overall missing rate is relatively low.

\subsection*{AI use statement}

In this work, we used generative AI tools for English language polishing, grammar checking, and debugging computer code. We have not used generative AI tools for research design, development of research ideas, data analysis, interpretation of results, or generation of scientific content. We have reviewed all AI-assisted work and verified the correctness of any AI-assisted code through testing. We take responsibility for the final content of this work, including text, claims, and artifacts produced with the aid of generative AI.

\subsection*{Reproducibility statement}

To facilitate reproducibility, the datasets and code used to reproduce all experimental results are provided as supplementary materials. They will be made publicly available in a GitHub repository upon completion of the review process.

\subsubsection*{Acknowledgments}
This research includes computations using the computational cluster Katana supported by Research Technology Services at UNSW Sydney.

\bibliography{iclr2027_conference}
\bibliographystyle{iclr2027_conference}

\appendix
\section{Manifold Learning via Mixture VAEs}\label{manifold-learning}

The manifold learning framework by \cite{alberti2024manifold} is built upon the ideas of mixtures of variational autoencoders \citep{ye2020mixtures} and normalizing flows \citep{rezende2015variational}. Its structure is represented by $E_c$ and $D_c$ for different charts defined in Equation~\ref{eqn-conds}. In particular, we define
\[
D_c = \tilde{D}_c \circ T, \qquad E_c = T^{-1} \circ \tilde{E}_c,
\]
for a normalizing flow $T:\mathbb{R}^d \to \mathbb{R}^d$. We adopt the implementation of \cite{alberti2024manifold}, where the network architecture is described in detail. In the following, we briefly introduce the training procedure and the loss function. Let $\xi \sim \mathcal{N}(\bm{0}, I_d)$. Using the reparameterization trick, the evidence lower bound (ELBO; \cite{kingma2013auto}) for a single chart can be written as
\begin{align}
\mathrm{ELBO}_c(\bm{x}) = \mathbb{E}_{\xi}\Big[ & \log p_{\bm{z}}\big(T^{-1}(\tilde{E}_c(\bm{x}) + \sigma_z \xi)\big) + \log \big|\det \nabla T^{-1}(\tilde{E}_c(\bm{x}) + \sigma_z \xi)\big| \nonumber \\
& - \frac{1}{2\sigma_x^2} \big\| \tilde{D}_c(\tilde{E}_c(\bm{x}) + \sigma_z \xi) - \bm{x}\big\|^2 \Big],
\end{align}
where $p_{\bm{z}}$ is the density function of the simple prior specified for $\bm{z}$.

\resp{Let $\beta_c=p(c\mid\bm{x})$ and $\alpha_c=p(c)$. For each training step, $\beta_c$ is first estimated by
approximating $\log p(\bm{x}\mid c)$ with the chart-specific ELBO,
$\mathrm{ELBO}_c(\bm{x})$. Specifically,
\begin{equation}
\beta_c
=
p(c\mid\bm{x})
=
\frac{p(c)p(\bm{x}\mid c)}
{\sum_{c'}p(c')p(\bm{x}\mid c')}
\doteq
\frac{
\alpha_c\exp\left[\mathrm{ELBO}_c(\bm{x})\right]
}{
\sum_{c'}\alpha_{c'}\exp\left[\mathrm{ELBO}_{c'}(\bm{x})\right]
}.
\end{equation}
The chart probabilities are then updated using the estimated posterior
probabilities:
\begin{equation}
\alpha_c
=
\frac{1}{n}\sum_{k=1}^{n}p(c\mid\bm{x}_k)
=
\frac{1}{n}\sum_{k=1}^{n}\hat{\beta}_{kc},
\end{equation}
where $\hat{\beta}_{kc}$ denotes the estimated probability that
$\bm{x}_k$ belongs to chart $c$.
\\
Given the posterior probabilities, the marginal log-likelihood can be
lower bounded using Jensen's inequality:
\begin{align}
\log p(\bm{x})
&=
\log\sum_c\beta_c
\frac{\alpha_c p(\bm{x}\mid c)}{\beta_c}
\geq
\sum_c\beta_c\log p(\bm{x}\mid c)
+
\sum_c\beta_c\log\frac{\alpha_c}{\beta_c}.
\end{align}
During the optimization of the encoder and decoder parameters,
$\alpha_c$ and $\beta_c$ are treated as fixed quantities, as they are
updated separately.
Consequently, terms involving only $\alpha_c$ and $\beta_c$ do not
contribute to the gradient update hence can be removed. Replacing $\log p(\bm{x}\mid c)$ with
the corresponding chart-specific ELBO and using the estimated posterior
probabilities $\hat{\beta}_c$, the objective for gradient-based
optimization is therefore
\begin{equation}\label{loss-01}
\mathcal{L}(\theta)
=
-\sum_c\hat{\beta}_c
\mathrm{ELBO}_c(\bm{x}).
\end{equation}
}

Two further refinements are made during the training process. First, because the loss term in Equation~\ref{loss-01} is purely probabilistic and ignores the underlying manifold geometry, a geometric penalty term is introduced during the initial training phase. This penalty forces data points that are close in the original high-dimensional space to remain close in the latent space, thereby ensuring that neighboring points are more likely to be assigned to the same chart. Second, since the charts covering a manifold are open sets that can overlap by definition, data points that are close to the boundary of one chart can possibly belong to multiple charts. To encourage that, a modified estimation of $\beta_c$ is introduced which leads to a modified loss function that is applied during the final stage of training.

\section{Theoretical Justification of SIR}\label{SIR-proof}

\addtocounter{theorem}{-1}
\begin{theorem}
Assume all learned models are exact and $\{(\bm{z}_k, c_k)\}_{k=1}^n$ is a random sample of $p(\bm{z}, c)$, then the pair $(\bm{z}^*, c^*)$ produced by Algorithm~\ref{alg:sir} converges weakly (in distribution) to $p(\bm{z}, c \mid \bm{x}_\mathrm{obs})$ as $n \to \infty$.
\end{theorem}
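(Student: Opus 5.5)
The plan is to show that, for every bounded continuous test function $f$ on $\mathbb{R}^d\times\{1,\dots,C\}$, the expectation $\mathbb{E}[f(\bm{z}^*,c^*)]$ converges to $\sum_c\int f(\bm{z},c)\,p(\bm{z},c\mid\bm{x}_\mathrm{obs})\,d\bm{z}$. Conditional on the sample $\{(\bm{z}_k,c_k)\}_{k=1}^n$, Algorithm~\ref{alg:sir} selects index $k$ with probability $\tilde w_k$. Hence
\[
\mathbb{E}\left[f(\bm{z}^*,c^*)\mid \{(\bm{z}_k,c_k)\}\right]=\frac{\frac1n\sum_{k=1}^n f(\bm{z}_k,c_k)\,w_k}{\frac1n\sum_{k=1}^n w_k},
\qquad w_k=p(\bm{x}_\mathrm{obs}\mid \bm{z}_k,c_k).
\]
This reduces the problem to the asymptotics of a self-normalized importance sampling ratio.

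\textbf{Law of large numbers.} Since the models are assumed exact, $(\bm{z}_k,c_k)$ are i.i.d.\ from $p(\bm{z},c)$. By the strong law of large numbers, the numerator converges almost surely to
\[
\sum_c\int f(\bm{z},c)\,p(\bm{x}_\mathrm{obs}\mid\bm{z},c)\,p(\bm{z},c)\,d\bm{z}.
\]
The denominator converges almost surely to $\sum_c\int p(\bm{x}_\mathrm{obs}\mid\bm{z},c)\,p(\bm{z},c)\,d\bm{z}=p(\bm{x}_\mathrm{obs})$. Integrability of the numerator follows from $|f|\le\|f\|_\infty$ together with the finiteness of the denominator's limit. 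Indeed, $p(\bm{x}_\mathrm{obs}\mid\bm{z},c)$ is the marginal of the Gaussian $\mathcal{N}(D_c(\bm{z}),\sigma_x^2I_p)$ on the observed coordinates, so it is bounded by $(2\pi\sigma_x^2)^{-|\mathrm{obs}|/2}$. By Bayes' rule, the ratio of the two limits equals $\sum_c\int f\,p(\bm{z},c\mid\bm{x}_\mathrm{obs})\,d\bm{z}$, provided $p(\bm{x}_\mathrm{obs})>0$. This positivity holds because the Gaussian likelihood is strictly positive everywhere. So the conditional expectation converges almost surely.

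\textbf{Removing the conditioning.} The conditional expectation is a ratio of averages, but it is bounded in absolute value by $\|f\|_\infty$: it is a convex combination of values of $f$ whenever $\sum_k w_k>0$, which holds surely by positivity of the Gaussian density. Dominated convergence then gives convergence of the unconditional expectation $\mathbb{E}[f(\bm{z}^*,c^*)]$. By the portmanteau theorem, $(\bm{z}^*,c^*)$ converges weakly to $p(\bm{z},c\mid\bm{x}_\mathrm{obs})$. In fact this proves the stronger statement that the conditional law converges weakly almost surely. For that version one restricts to a countable convergence-determining class of $f$ so that a single null set works for all of them.

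\textbf{Main obstacle.} The only delicate point is justifying the integrability and positivity conditions: boundedness of the weights and $p(\bm{x}_\mathrm{obs})>0$. The exact Gaussian form of the decoder handles both. A second subtlety is interpreting the hypothesis "all learned models are exact". I would make it explicit as two assumptions. First, $p(\bm{x}_\mathrm{obs}\mid\bm{z},c)$ is the true conditional of the joint $p(\bm{x},\bm{z},c)$. Second, the samples are drawn from the corresponding true prior $p(\bm{z},c)$. Under these two assumptions the Bayes identity used above is exact.
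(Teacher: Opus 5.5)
Your proof is correct and follows the same route as the paper: write the resampling expectation as a self-normalized importance-sampling ratio, apply the strong law of large numbers to numerator and denominator using the boundedness of the Gaussian likelihood, and identify the ratio of the limits with the posterior expectation via Bayes' rule. Your additional steps make rigorous points that the paper only asserts, namely its claimed ``equivalence'' between convergence of the resampling law and convergence of $(\bm{z}^*,c^*)$, and the fact that the null set depends on $f$. These steps are checking $p(\bm{x}_\mathrm{obs})>0$, using dominated convergence to pass from the conditional resampling law to the unconditional law of $(\bm{z}^*,c^*)$, and restricting to a countable convergence-determining class for the almost-sure version.
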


\begin{proof}
    Suppose $\{\bm{z}_k, c_k\}_{k=1}^n$ is an independent, identically distributed sample from $p(\bm{z}, c)$. According to Algorithm~\ref{alg:sir}, the normalized weights we use for resampling are defined as
    \begin{equation}
        \tilde{w}_k
        =
        \frac{
        p(\bm x_{\mathrm{obs}}\mid \bm{z}_k, c_k)
        }{
        \sum_{k=1}^n
        p(\bm x_{\mathrm{obs}}\mid \bm{z}_k, c_k)
        },
    \end{equation}
    which gives a discrete distribution for resampling, denoted as $p_{\tilde{w}}(\bm{z}, c)$.

    We want to show $(\bm{z}^*, c^*)$ sampled according to $\tilde{w}_k$ converges weakly to $p(\bm{z}, c \mid \bm x_{\mathrm{obs}})$, which is equivalent to show that the discrete distribution $p_{\tilde{w}}(\bm{z}, c)$ it is sampled from converges weakly to the target distribution as $n \to \infty$.
    
    Consider a bounded continuous function $f(\bm{z}, c)$, whose mean under $p_{\tilde{w}}(\bm{z}, c)$ is
    \begin{equation}
    \mathbb{E}_{p_{\tilde{w}}(\bm{z}, c)}\big[f(\bm{z}, c)\big]
    =
    \sum_{k} \tilde{w}_k f(\bm{z}_k, c_k)
    =
    \frac{
    1/n \sum_{k}
    f(\bm{z}_k, c_k)\, p(\bm{x}_{\mathrm{obs}} \mid \bm{z}_k, c_k)
    }{
    1/n \sum_{k}
    p(\bm{x}_{\mathrm{obs}} \mid \bm{z}_k, c_k)
    }.
    \end{equation}
    Since $p(\bm{x}_{\mathrm{obs}} \mid \bm{z}, c)$ is Gaussian hence bounded, according to law of large numbers, we have
    \begin{equation}
    \frac{1}{n} \sum_{k} f(\bm{z}_k, c_k)\, p(\bm{x}_{\mathrm{obs}} \mid \bm{z}_k, c_k) \xrightarrow{\mathrm{a.s.}} \mathbb{E}_{p(\bm{z}, c)}[f(\bm{z}, c)p(\bm{x}_{\mathrm{obs}} \mid \bm{z}, c)] 
    \end{equation}
    and
    \begin{equation}
        \frac{1}{n}\sum_{k}
    p(\bm{x}_{\mathrm{obs}} \mid \bm{z}_k, c_k) \xrightarrow{\mathrm{a.s.}} \mathbb{E}_{p(\bm z,c)}
    p(\bm x_{\mathrm{obs}}\mid \bm{z}, c)
    =
    p(\bm x_{\mathrm{obs}}).
    \end{equation}

    Finally, notice that
    \begin{equation}
        \frac{\mathbb{E}_{p(\bm{z}, c)}[f(\bm{z}, c)p(\bm{x}_{\mathrm{obs}} \mid \bm{z}, c)]}{p(\bm x_{\mathrm{obs}})} = \int \sum_cf(\bm{z}, c)p(\bm{z}, c \mid \bm{x}_\mathrm{obs})d\bm{z} = \mathbb{E}_{p(\bm{z}, c \mid \bm{x}_\mathrm{obs})}[f(\bm{z}, c)],
    \end{equation}
    we know $\mathbb{E}_{p_{\tilde{w}}(\bm{z}, c)}\big[f(\bm{z}, c)\big]$ converges almost surely to $\mathbb{E}_{p(\bm{z}, c \mid \bm{x}_\mathrm{obs})}[f(\bm{z}, c)]$ as $n \to \infty$ and the proof is complete.
\end{proof}

\section{Implementation Details}\label{implementation}

\subsection{Real-world Datasets}

The six real-world datasets are either obtained from the UCI Machine Learning Repository or collected by the authors, which are available in the supplementary materials accompanying the manuscript.

The superconductivity dataset contains records on 21263 superconductors along with their associated material features. A total of 78 continuous features are used for the imputation tasks, while the critical temperature is treated as the response variable for downstream evaluation. The estimated intrinsic dimension of this dataset is $d = 2$. We choose the number of charts to be $6$.

The energy dataset contains 19735 time-stamped observations of indoor temperature and humidity, energy consumption, and external weather variables aligned via timestamps. A total of 24 continuous features are used for the imputation tasks. The estimated intrinsic dimension of this dataset is $d = 3$. We choose the number of charts to be $6$.

The finance dataset consists of 45959 records of 15-minute interval closing prices for 50 highly liquid U.S. stocks. The estimated intrinsic dimension of this dataset is $d = 4$. We choose the number of charts to be $6$.

The powerplant dataset contains 9568 records collected from a combined cycle power plant. A total of 4 continuous features are used for the imputation tasks. The estimated intrinsic dimension of this dataset is $d = 3$. We choose the number of charts to be $2$.

The wine dataset contains 4898 records on quantities of different constituents found in wine of different qualities. A total of 11 continuous features are used for the imputation tasks. The estimated intrinsic dimension of this dataset is $d = 2$. We choose the number of charts to be $1$.

The facial expression dataset contains 27934 records of landmark coordinates extracted from facial expression frames. A total of 200 continuous features are used for the imputation tasks. The estimated intrinsic dimension of this dataset is $d = 3$. We choose the number of charts to be $6$.

\subsection{Imputation Settings}

Since the joint diffusion process is carried out in a latent space of much lower dimension, thanks to the manifold hypothesis, relatively simple neural networks can already achieve strong performance. Our model uses a neural network with a hidden dimension of 256, which serves as a shared representation space for all inputs. The continuous variable $\bm{z}_t$ is concatenated with a time embedding for $t$ and a learned embedding of the discrete class label for $c_t$; both are mapped into the same hidden dimension through small MLPs. This combined representation is then passed through a shared trunk consisting of two fully connected layers with SiLU activations, which allows interaction between the continuous and discrete components. Finally, two linear output heads are used: one maps the hidden representation back to the continuous data dimension to predict $\bm{z}_0$, and the other produces logits for predicting $c_0$. This simple shared-hidden-dimension design enables efficient learning of joint dynamics while keeping the architecture lightweight and scalable.

The total number of diffusion steps for both the forward and reverse processes is set to $T = 500$. The corruption schedule is parameterized through $\alpha_t$, which is linearly decreasing from $1$, and is applied to both continuous and discrete variables. In particular, if multiple charts are used, we have
\begin{equation}
\alpha_t = 1 - 10^{-4} t, 
\qquad
Q_t(i,i) = \alpha_t, 
\qquad
Q_t(i,j) = \frac{1 - \alpha_t}{C - 1}, \; i \neq j.
\end{equation}
The size of the joint latent sample is chosen to be $10000$ for all datasets.

More powerful neural network architectures, as well as more careful tuning of the diffusion schedule and related hyperparameters, may further improve the performance of the proposed method, and are left for future exploration.

\section{Experimental Results}\label{results}

\begin{table}[htbp]
\centering
\small
\setlength{\tabcolsep}{4pt}

\begin{tabular}{c|c|cccccccc}
\toprule
\multicolumn{10}{c}{\textbf{RMSE}} \\
\midrule
\textbf{Mechanism} & \textbf{Rate} & Mean & PCA & KNN & MICE & MissForest & GAIN & VAE & Proposal \\
\midrule
\multirow{9}{*}{MCAR}
& 5\%  & 0.9953 & 0.3075 & 0.3722 & 0.1454 & \textbf{0.0908} & 0.3759 & 0.4425 & 0.2153 \\
& 10\% & 0.9960 & 0.3254 & 0.3702 & 0.1799 & \textbf{0.1203} & 0.3854 & 0.4906 & 0.2163 \\
& 15\% & 0.9956 & 0.3528 & 0.3750 & 0.2254 & \textbf{0.1583} & 0.4122 & 1.0150 & 0.2174 \\
& 20\% & 0.9976 & 0.3812 & 0.3816 & 0.2849 & \textbf{0.1838} & 0.4539 & 0.6065 & 0.2235 \\
& 25\% & 0.9972 & 0.4200 & 0.3864 & 0.3596 & 0.2300 & 0.4750 & 0.6611 & \textbf{0.2247} \\
& 30\% & 0.9971 & 0.4804 & 0.3954 & 0.4563 & 0.2931 & 0.5480 & 0.7357 & \textbf{0.2300} \\
& 35\% & 0.9968 & 0.5556 & 0.4092 & 0.5747 & 0.3782 & 0.6309 & 0.8523 & \textbf{0.2424} \\
& 40\% & 0.9970 & 0.6377 & 0.4385 & 0.7320 & 0.5133 & 0.7457 & 0.9496 & \textbf{0.2784} \\
& 45\% & 0.9972 & 0.7953 & 0.5170 & 0.9258 & 0.7425 & 0.8900 & 1.2201 & \textbf{0.4509} \\
\midrule

\multirow{9}{*}{MAR}
& 5\%  & 1.0538 & 0.3530 & 0.4015 & 0.1553 & \textbf{0.0959} & 0.3681 & 0.5107 & 0.2247 \\
& 10\% & 1.0227 & 0.3551 & 0.3848 & 0.1854 & \textbf{0.1219} & 0.3740 & 0.5066 & 0.2230 \\
& 15\% & 1.0049 & 0.3862 & 0.3777 & 0.2247 & \textbf{0.1517} & 0.4009 & 0.5499 & 0.2291 \\
& 20\% & 0.9926 & 0.4003 & 0.3745 & 0.2760 & \textbf{0.1851} & 0.4270 & 1.3008 & 0.2165 \\
& 25\% & 0.9852 & 0.4347 & 0.3758 & 0.3421 & 0.2192 & 0.5147 & 0.6379 & \textbf{0.2165} \\
& 30\% & 0.9804 & 0.4768 & 0.3801 & 0.4244 & 0.2696 & 0.6332 & 0.7024 & \textbf{0.2209} \\
& 35\% & 0.9785 & 0.5346 & 0.3888 & 0.5376 & 0.3423 & 0.7915 & 0.7948 & \textbf{0.2308} \\
& 40\% & 0.9802 & 0.6067 & 0.4134 & 0.6940 & 0.4479 & 0.9266 & 0.9294 & \textbf{0.2575} \\
& 45\% & 0.9841 & 0.7509 & 0.4841 & 0.8776 & 0.6673 & 1.0721 & 3.1421 & \textbf{0.4138} \\
\midrule

\multirow{9}{*}{MNAR}
& 5\%  & 1.5822 & 0.5610 & 0.7158 & 0.6185 & 0.3369 & 0.6472 & 0.8736 & \textbf{0.3208} \\
& 10\% & 1.3428 & 0.5160 & 0.6186 & 0.6277 & 0.3393 & 0.7462 & 0.8355 & \textbf{0.2891} \\
& 15\% & 1.2152 & 0.5169 & 0.5694 & 0.6519 & 0.3655 & 0.7663 & 1.1314 & \textbf{0.2786} \\
& 20\% & 1.1309 & 0.5192 & 0.5347 & 0.6656 & 0.3755 & 0.8808 & 0.8403 & \textbf{0.2724} \\
& 25\% & 1.0729 & 0.5441 & 0.5153 & 0.6897 & 0.4254 & 1.0359 & 1.2839 & \textbf{0.2847} \\
& 30\% & 1.0316 & 0.5741 & 0.5042 & 0.7212 & 0.4644 & 1.0546 & 0.8849 & \textbf{0.2891} \\
& 35\% & 1.0042 & 0.6291 & 0.4967 & 0.7681 & 0.5389 & 1.1305 & - & \textbf{0.2942} \\
& 40\% & 0.9887 & 0.7061 & 0.5028 & 0.8364 & 0.6352 & 1.3300 & - & \textbf{0.3445} \\
& 45\% & 0.9846 & 0.8080 & 0.5479 & 0.9235 & 0.8108 & 1.5958 & 1.1745 & \textbf{0.5069}\\
\bottomrule
\end{tabular}
\caption{Comparison of imputation methods on the superconductivity dataset under different missing mechanisms and missing rates using RMSE. Best results are shown in bold, - means the method either collapses or the value is too big to be meaningful.}
\label{tab:superconductivity:rmse}
\end{table}

\begin{table}[htbp]
\centering
\small
\setlength{\tabcolsep}{4pt}

\begin{tabular}{c|c|cccccccc}
\toprule
\multicolumn{10}{c}{\textbf{Wasserstein Distance}} \\
\midrule
\textbf{Mechanism} & \textbf{Rate} & Mean & PCA & KNN & MICE & MissForest & GAIN & VAE & Proposal \\
\midrule

\multirow{9}{*}{MCAR}
& 5\%  & 1.5916 & 0.8434 & 0.8676 & 0.5518 & \textbf{0.3109} & 0.9450 & 0.9019 & 0.6334 \\
& 10\% & 1.9330 & 1.0494 & 1.0594 & 0.7536 & \textbf{0.4608} & 1.1642 & 1.1914 & 0.7791 \\
& 15\% & 2.1588 & 1.2214 & 1.2017 & 0.9526 & \textbf{0.6182} & 1.3477 & 1.4369 & 0.8753 \\
& 20\% & 2.3288 & 1.3683 & 1.3246 & 1.1687 & \textbf{0.7734} & 1.5451 & 1.6147 & 0.9637 \\
& 25\% & 2.4607 & 1.5217 & 1.4196 & 1.4111 & \textbf{0.9584} & 1.6770 & 1.7704 & 1.0393 \\
& 30\% & 2.5648 & 1.7103 & 1.5176 & 1.6878 & 1.1909 & 1.8941 & 1.8948 & \textbf{1.0958} \\
& 35\% & 2.6529 & 1.8918 & 1.5968 & 1.9812 & 1.4466 & 2.1001 & 2.0003 & \textbf{1.1586} \\
& 40\% & 2.7251 & 2.0672 & 1.6748 & 2.3016 & 1.7418 & 2.3462 & 2.0621 & \textbf{1.2145} \\
& 45\% & 2.7886 & 2.3753 & 1.7731 & 2.6412 & 2.0749 & 2.6054 & 2.1155 & \textbf{1.3154} \\
\midrule

\multirow{9}{*}{MAR}
& 5\%  & 1.6163 & 0.8645 & 0.8830 & 0.5600 & \textbf{0.3235} & 0.9287 & 0.9257 & 0.6395 \\
& 10\% & 1.9470 & 1.0622 & 1.0694 & 0.7555 & \textbf{0.4675} & 1.1338 & 1.1901 & 0.7814 \\
& 15\% & 2.1640 & 1.2416 & 1.2081 & 0.9453 & \textbf{0.6150} & 1.3317 & 1.4211 & 0.8801 \\
& 20\% & 2.3257 & 1.3802 & 1.3122 & 1.1524 & \textbf{0.7772} & 1.5008 & 1.6218 & 0.9573 \\
& 25\% & 2.4537 & 1.5448 & 1.4087 & 1.3890 & \textbf{0.9420} & 1.7661 & 1.7472 & 1.0223 \\
& 30\% & 2.5590 & 1.7062 & 1.4923 & 1.6488 & 1.1598 & 2.0518 & 1.8672 & \textbf{1.0846} \\
& 35\% & 2.6451 & 1.8748 & 1.5694 & 1.9466 & 1.3980 & 2.3927 & 1.9707 & \textbf{1.1288} \\
& 40\% & 2.7197 & 2.0496 & 1.6501 & 2.2882 & 1.6662 & 2.6445 & 2.0592 & \textbf{1.1997} \\
& 45\% & 2.7836 & 2.3414 & 1.7552 & 2.6162 & 2.0455 & 2.8698 & 2.1775 & \textbf{1.3159} \\
\midrule

\multirow{9}{*}{MNAR}
& 5\%  & 1.9587 & 1.0195 & 1.1355 & 0.9237 & \textbf{0.5391} & 1.1500 & 1.1777 & 0.7094 \\
& 10\% & 2.2048 & 1.2389 & 1.3271 & 1.2532 & \textbf{0.7493} & 1.5504 & 1.4801 & 0.8497 \\
& 15\% & 2.3503 & 1.4238 & 1.4572 & 1.5232 & 0.9579 & 1.7869 & 1.7090 & \textbf{0.9551} \\
& 20\% & 2.4507 & 1.5615 & 1.5384 & 1.7295 & 1.1115 & 2.0598 & 1.8373 & \textbf{1.0357} \\
& 25\% & 2.5323 & 1.7092 & 1.6139 & 1.9158 & 1.3288 & 2.4277 & 1.9723 & \textbf{1.1102} \\
& 30\% & 2.6013 & 1.8407 & 1.6718 & 2.0901 & 1.5140 & 2.5183 & 2.0415 & \textbf{1.1598} \\
& 35\% & 2.6645 & 1.9987 & 1.7223 & 2.2706 & 1.7450 & 2.7547 & - & \textbf{1.2050} \\
& 40\% & 2.7272 & 2.1899 & 1.7739 & 2.4662 & 1.9805 & 3.1448 & - & \textbf{1.2734} \\
& 45\% & 2.7879 & 2.4342 & 1.8508 & 2.6672 & 2.2948 & 3.5493 & 2.2400 & \textbf{1.4113}\\
\bottomrule
\end{tabular}

\caption{Comparison of imputation methods on the superconductivity dataset under different missing mechanisms and missing rates using Wasserstein distance. Best results are shown in bold, - means the method either collapses or the value is too big to be meaningful.}
\label{tab:superconductivity:wass}
\end{table}

\begin{table}[htbp]
\centering
\small
\setlength{\tabcolsep}{4pt}

\begin{tabular}{c|c|cccccccc}
\toprule
\multicolumn{10}{c}{\textbf{RMSE}} \\
\midrule
\textbf{Mechanism} & \textbf{Rate} & Mean & PCA & KNN & MICE & MissForest & GAIN & VAE & Proposal \\
\midrule

\multirow{9}{*}{MCAR}
& 5\%  & 0.9972 & 0.9305 & 0.4324 & 0.4488 & \textbf{0.1320} & 0.6380 & 0.4559 & 0.2640 \\
& 10\% & 0.9992 & 0.9254 & 0.4343 & 0.4595 & \textbf{0.1610} & 0.6223 & 0.5110 & 0.2735 \\
& 15\% & 0.9991 & 0.9179 & 0.4379 & 0.4751 & \textbf{0.1978} & 0.6528 & - & 0.2703 \\
& 20\% & 0.9980 & 0.8978 & 0.4431 & 0.4958 & \textbf{0.2529} & 0.6400 & - & 0.2870 \\
& 25\% & 0.9984 & 0.8762 & 0.4543 & 0.5261 & 0.3212 & 0.6613 & - & \textbf{0.2982} \\
& 30\% & 0.9971 & 0.8509 & 0.4712 & 0.5648 & 0.4171 & 0.6619 & - & \textbf{0.3497} \\
& 35\% & 0.9970 & 0.8325 & 0.5058 & 0.6252 & 0.5309 & 0.6870 & - & \textbf{0.4557} \\
& 40\% & 0.9972 & 0.8603 & 0.5721 & 0.7149 & 0.6833 & 0.7960 & - & \textbf{0.6476} \\
& 45\% & 0.9967 & 0.9245 & \textbf{0.7229} & 0.8440 & 0.8698 & 0.9029 & - & 0.9880 \\
\midrule

\multirow{9}{*}{MAR}
& 5\%  & 1.0718 & 0.9137 & 0.4172 & 0.4342 & \textbf{0.1217} & 0.6065 & 0.4632 & 0.2502 \\
& 10\% & 1.0500 & 0.9117 & 0.4248 & 0.4546 & \textbf{0.1565} & 0.5800 & 0.5214 & 0.2552 \\
& 15\% & 1.0372 & 0.9160 & 0.4324 & 0.4768 & \textbf{0.1964} & 0.6344 & - & 0.2690 \\
& 20\% & 1.0243 & 0.8849 & 0.4389 & 0.4969 & \textbf{0.2497} & 0.6098 & - & 0.2780 \\
& 25\% & 1.0137 & 0.8602 & 0.4503 & 0.5267 & 0.3125 & 0.7369 & - & \textbf{0.2894} \\
& 30\% & 1.0043 & 0.8256 & 0.4670 & 0.5649 & 0.3990 & 0.7590 & - & \textbf{0.3381} \\
& 35\% & 0.9989 & 0.8408 & 0.4962 & 0.6214 & 0.5014 & 0.8280 & - & \textbf{0.4294} \\
& 40\% & 0.9961 & 0.8733 & \textbf{0.5668} & 0.7107 & 0.6442 & 0.9290 & - & 0.6443 \\
& 45\% & 0.9947 & 0.9221 & \textbf{0.7101} & 0.8320 & 0.8309 & 0.9893 & - & 0.9700 \\
\midrule

\multirow{9}{*}{MNAR}
& 5\%  & 1.3612 & 0.9810 & 0.6027 & 0.6362 & \textbf{0.2313} & 0.8039 & 0.6495 & 0.3368 \\
& 10\% & 1.2339 & 0.9400 & 0.5461 & 0.6112 & \textbf{0.2484} & 0.6675 & 0.8799 & 0.3108 \\
& 15\% & 1.1502 & 0.9382 & 0.5190 & 0.6198 & \textbf{0.2769} & 0.7399 & - & 0.3064 \\
& 20\% & 1.0891 & 0.9313 & 0.5117 & 0.6495 & 0.3316 & 0.7905 & - & \textbf{0.3165} \\
& 25\% & 1.0453 & 0.9221 & 0.5200 & 0.6864 & 0.4182 & 1.1245 & - & \textbf{0.3422} \\
& 30\% & 1.0128 & 0.9431 & 0.5460 & 0.7265 & 0.5012 & 1.0835 & - & \textbf{0.4210} \\
& 35\% & 0.9920 & 0.9191 & 0.5884 & 0.7727 & 0.6104 & 1.1634 & - & \textbf{0.5590} \\
& 40\% & 0.9816 & 0.9007 & \textbf{0.6577} & 0.8272 & 0.7349 & 1.1393 & - & 0.7511 \\
& 45\% & 0.9821 & 0.9150 & \textbf{0.7621} & 0.8878 & 0.9190 & 1.1339 & - & 1.0172\\
\bottomrule
\end{tabular}

\caption{Comparison of imputation methods on the energy dataset under different missing mechanisms and missing rates using RMSE. Best results are shown in bold, - means the method either collapses or the value is too big to be meaningful.}
\label{tab:energy:rmse}
\end{table}

\begin{table}[htbp]
\centering
\small
\setlength{\tabcolsep}{4pt}

\begin{tabular}{c|c|cccccccc}
\toprule
\multicolumn{10}{c}{\textbf{Wasserstein Distance}} \\
\midrule
\textbf{Mechanism} & \textbf{Rate} & Mean & PCA & KNN & MICE & MissForest & GAIN & VAE & Proposal \\
\midrule

\multirow{9}{*}{MCAR}
& 5\%  & 1.1397 & 0.9517 & 0.7051 & 0.7072 & \textbf{0.3024} & 0.8845 & 0.5696 & 0.4861 \\
& 10\% & 1.4171 & 1.2163 & 0.9045 & 0.9216 & \textbf{0.4383} & 1.1005 & 0.7835 & 0.6308 \\
& 15\% & 1.5847 & 1.3744 & 1.0315 & 1.0689 & \textbf{0.5748} & 1.2710 & - & 0.7294 \\
& 20\% & 1.7020 & 1.4670 & 1.1243 & 1.1820 & \textbf{0.7314} & 1.3469 & - & 0.7954 \\
& 25\% & 1.7921 & 1.5260 & 1.2045 & 1.2850 & 0.8797 & 1.4438 & - & \textbf{0.8574} \\
& 30\% & 1.8660 & 1.5577 & 1.2649 & 1.3756 & 1.0437 & 1.4931 & - & \textbf{0.9143} \\
& 35\% & 1.9292 & 1.6073 & 1.3267 & 1.4698 & 1.1929 & 1.5534 & - & \textbf{0.9846} \\
& 40\% & 1.9897 & 1.7021 & 1.3983 & 1.5891 & 1.3344 & 1.6992 & - & \textbf{1.0565} \\
& 45\% & 2.0515 & 1.8900 & 1.5516 & 1.7716 & 1.5124 & 1.8672 & - & \textbf{1.1506} \\
\midrule

\multirow{9}{*}{MAR}
& 5\%  & 1.1717 & 0.9421 & 0.6949 & 0.6952 & \textbf{0.3002} & 0.8472 & 0.5710 & 0.4842 \\
& 10\% & 1.4461 & 1.2153 & 0.8964 & 0.9175 & \textbf{0.4365} & 1.0578 & 0.7758 & 0.6218 \\
& 15\% & 1.6122 & 1.3769 & 1.0286 & 1.0703 & \textbf{0.5725} & 1.2536 & - & 0.7232 \\
& 20\% & 1.7221 & 1.4647 & 1.1215 & 1.1871 & \textbf{0.7258} & 1.3275 & - & 0.7951 \\
& 25\% & 1.8078 & 1.5211 & 1.2021 & 1.2929 & 0.8806 & 1.5353 & - & \textbf{0.8556} \\
& 30\% & 1.8777 & 1.5471 & 1.2665 & 1.3856 & 1.0369 & 1.6239 & - & \textbf{0.9197} \\
& 35\% & 1.9385 & 1.6273 & 1.3216 & 1.4831 & 1.1825 & 1.7181 & - & \textbf{0.9809} \\
& 40\% & 1.9945 & 1.7170 & 1.3969 & 1.6025 & 1.3303 & 1.8857 & - & \textbf{1.0642} \\
& 45\% & 2.0542 & 1.8949 & 1.5531 & 1.7810 & 1.5148 & 1.9801 & - & \textbf{1.1665} \\
\midrule

\multirow{9}{*}{MNAR}
& 5\%  & 1.2797 & 0.9583 & 0.8098 & 0.8267 & \textbf{0.3714} & 0.9811 & 0.6778 & 0.5275 \\
& 10\% & 1.5205 & 1.2191 & 1.0029 & 1.0528 & \textbf{0.5346} & 1.1291 & 0.9207 & 0.6726 \\
& 15\% & 1.6515 & 1.3775 & 1.1158 & 1.2033 & \textbf{0.6719} & 1.3409 & - & 0.7603 \\
& 20\% & 1.7428 & 1.4890 & 1.2036 & 1.3278 & \textbf{0.8280} & 1.4611 & - & 0.8326 \\
& 25\% & 1.8139 & 1.5642 & 1.2785 & 1.4375 & 1.0000 & 1.7106 & - & \textbf{0.8879} \\
& 30\% & 1.8735 & 1.6562 & 1.3512 & 1.5362 & 1.1631 & 1.8700 & - & \textbf{0.9610} \\
& 35\% & 1.9286 & 1.7141 & 1.4256 & 1.6301 & 1.3215 & 1.9737 & - & \textbf{1.0645} \\
& 40\% & 1.9839 & 1.7741 & 1.5117 & 1.7305 & 1.4727 & 1.9798 & - & \textbf{1.2065} \\
& 45\% & 2.0497 & 1.9019 & 1.6345 & 1.8460 & 1.6731 & 2.0857 & - & \textbf{1.3602}\\
\bottomrule
\end{tabular}

\caption{Comparison of imputation methods on the energy dataset under different missing mechanisms and missing rates using Wasserstein distance. Best results are shown in bold, - means the method either collapses or the value is too big to be meaningful.}
\label{tab:energy:wass}
\end{table}

\begin{table}[htbp]
\centering
\small
\setlength{\tabcolsep}{4pt}

\begin{tabular}{c|c|cccccccc}
\toprule
\multicolumn{10}{c}{\textbf{RMSE}} \\
\midrule
\textbf{Mechanism} & \textbf{Rate} & Mean & PCA & KNN & MICE & MissForest & GAIN & VAE & Proposal \\
\midrule

\multirow{9}{*}{MCAR}
& 5\%  & 1.0026 & 0.2797 & 0.0638 & 0.1573 & \textbf{0.0180} & 0.3061 & - & 0.0436 \\
& 10\% & 1.0018 & 0.2849 & 0.0644 & 0.1689 & \textbf{0.0180} & 0.3235 & - & 0.0447 \\
& 15\% & 1.0023 & 0.2908 & 0.0648 & 0.1832 & \textbf{0.0226} & 0.2830 & - & 0.0448 \\
& 20\% & 1.0028 & 0.2996 & 0.0655 & 0.2024 & \textbf{0.0265} & 0.3251 & - & 0.0448 \\
& 25\% & 1.0031 & 0.3272 & 0.0663 & 0.2309 & \textbf{0.0325} & 0.3220 & - & 0.0457 \\
& 30\% & 1.0029 & 0.3464 & 0.0675 & 0.2796 & \textbf{0.0462} & 0.3893 & - & 0.0466 \\
& 35\% & 1.0035 & 0.3831 & 0.0699 & 0.3610 & 0.0832 & 0.5010 & - & \textbf{0.0500} \\
& 40\% & 1.0043 & 0.4580 & 0.0783 & 0.4995 & 0.1797 & 0.5872 & - & \textbf{0.0590} \\
& 45\% & 1.0046 & 0.6664 & \textbf{0.1967} & 0.7925 & 0.4584 & 0.7551 & - & 0.2578 \\
\midrule

\multirow{9}{*}{MAR}
& 5\%  & 1.0930 & 0.2895 & 0.0686 & 0.1645 & \textbf{0.0197} & 0.3256 & - & 0.0455 \\
& 10\% & 1.0607 & 0.2941 & 0.0676 & 0.1751 & \textbf{0.0214} & 0.2934 & - & 0.0457 \\
& 15\% & 1.0401 & 0.2990 & 0.0670 & 0.1867 & \textbf{0.0227} & 0.3123 & - & 0.0462 \\
& 20\% & 1.0261 & 0.3061 & 0.0669 & 0.2044 & \textbf{0.0272} & 0.3235 & - & 0.0463 \\
& 25\% & 1.0143 & 0.3313 & 0.0671 & 0.2295 & \textbf{0.0301} & 0.4288 & - & 0.0464 \\
& 30\% & 1.0060 & 0.3474 & 0.0679 & 0.2711 & \textbf{0.0428} & 0.5561 & - & 0.0473 \\
& 35\% & 1.0013 & 0.3778 & 0.0699 & 0.3436 & 0.0712 & 0.7879 & - & \textbf{0.0490} \\
& 40\% & 0.9994 & 0.4395 & 0.0790 & 0.4939 & 0.1488 & 0.9756 & - & \textbf{0.0631} \\
& 45\% & 1.0001 & 0.6413 & \textbf{0.1915} & 0.7709 & 0.4008 & 1.0012 & - & 0.2687 \\
\midrule

\multirow{9}{*}{MNAR}
& 5\%  & 1.5296 & 0.3491 & 0.0854 & 0.2208 & \textbf{0.0270} & 0.3640 & - & 0.0551 \\
& 10\% & 1.3649 & 0.3407 & 0.0797 & 0.2684 & \textbf{0.0305} & 0.3384 & - & 0.0517 \\
& 15\% & 1.2548 & 0.3377 & 0.0769 & 0.3285 & \textbf{0.0358} & 0.4887 & - & 0.0507 \\
& 20\% & 1.1761 & 0.3677 & 0.0752 & 0.3909 & \textbf{0.0458} & 0.7647 & - & 0.0501 \\
& 25\% & 1.1178 & 0.3838 & 0.0745 & 0.4687 & 0.0573 & 1.0368 & - & \textbf{0.0505} \\
& 30\% & 1.0741 & 0.4341 & 0.0758 & 0.5547 & 0.1026 & 1.1490 & - & \textbf{0.0518} \\
& 35\% & 1.0426 & 0.5275 & 0.0895 & 0.6487 & 0.1797 & 1.2473 & - & \textbf{0.0701} \\
& 40\% & 1.0211 & 0.6778 & \textbf{0.1595} & 0.7605 & 0.3334 & 1.4548 & - & 0.1792 \\
& 45\% & 1.0085 & 0.7938 & 0.3852 & 0.8791 & 0.6144 & 1.5599 & - & \textbf{0.4960}\\
\bottomrule
\end{tabular}

\caption{Comparison of imputation methods on the finance dataset under different missing mechanisms and missing rates using RMSE. Best results are shown in bold, - means the method either collapses or the value is too big to be meaningful.}
\label{tab:finance:rmse}
\end{table}

\begin{table}[htbp]
\centering
\small
\setlength{\tabcolsep}{4pt}

\begin{tabular}{c|c|cccccccc}
\toprule
\multicolumn{10}{c}{\textbf{Wasserstein Distance}} \\
\midrule
\textbf{Mechanism} & \textbf{Rate} & Mean & PCA & KNN & MICE & MissForest & GAIN & VAE & Proposal \\
\midrule

\multirow{9}{*}{MCAR}
& 5\%  & 1.4133 & 0.7548 & 0.3406 & 0.5601 & \textbf{0.1514} & 0.7599 & - & 0.2814 \\
& 10\% & 1.6862 & 0.9278 & 0.4348 & 0.7150 & \textbf{0.1945} & 0.9711 & - & 0.3498 \\
& 15\% & 1.8629 & 1.0480 & 0.5009 & 0.8249 & \textbf{0.2303} & 1.0207 & - & 0.4046 \\
& 20\% & 1.9976 & 1.1416 & 0.5458 & 0.9312 & \textbf{0.2628} & 1.1750 & - & 0.4399 \\
& 25\% & 2.1192 & 1.2506 & 0.5786 & 1.0468 & \textbf{0.3085} & 1.2373 & - & 0.4701 \\
& 30\% & 2.2138 & 1.3296 & 0.6083 & 1.1820 & \textbf{0.3776} & 1.4173 & - & 0.5030 \\
& 35\% & 2.2928 & 1.4203 & 0.6391 & 1.3675 & \textbf{0.4879} & 1.6644 & - & 0.5280 \\
& 40\% & 2.3687 & 1.5475 & 0.6674 & 1.6351 & 0.7049 & 1.8487 & - & \textbf{0.5531} \\
& 45\% & 2.4364 & 1.8855 & 0.8231 & 2.1138 & 1.1885 & 2.0928 & - & \textbf{0.6361} \\
\midrule

\multirow{9}{*}{MAR}
& 5\%  & 1.4418 & 0.7706 & 0.3480 & 0.5741 & \textbf{0.1563} & 0.7965 & - & 0.2858 \\
& 10\% & 1.7038 & 0.9482 & 0.4421 & 0.7280 & \textbf{0.2018} & 0.9317 & - & 0.3534 \\
& 15\% & 1.8759 & 1.0625 & 0.5042 & 0.8387 & \textbf{0.2348} & 1.0743 & - & 0.4144 \\
& 20\% & 2.0088 & 1.1561 & 0.5514 & 0.9411 & \textbf{0.2683} & 1.1793 & - & 0.4494 \\
& 25\% & 2.1154 & 1.2614 & 0.5851 & 1.0474 & \textbf{0.3082} & 1.4321 & - & 0.4804 \\
& 30\% & 2.2131 & 1.3347 & 0.6141 & 1.1750 & \textbf{0.3778} & 1.6971 & - & 0.5040 \\
& 35\% & 2.2995 & 1.4165 & 0.6410 & 1.3502 & \textbf{0.4696} & 2.0899 & - & 0.5293 \\
& 40\% & 2.3632 & 1.5359 & 0.6748 & 1.6421 & 0.6587 & 2.3605 & - & \textbf{0.5510} \\
& 45\% & 2.4309 & 1.8716 & 0.8408 & 2.1037 & 1.1394 & 2.4080 & - & \textbf{0.6739} \\
\midrule

\multirow{9}{*}{MNAR}
& 5\%  & 1.6059 & 0.8317 & 0.3824 & 0.6333 & \textbf{0.1740} & 0.8452 & - & 0.3053 \\
& 10\% & 1.8350 & 1.0054 & 0.4781 & 0.8212 & \textbf{0.2267} & 1.0002 & - & 0.3757 \\
& 15\% & 1.9876 & 1.1174 & 0.5322 & 0.9920 & \textbf{0.2673} & 1.2299 & - & 0.4357 \\
& 20\% & 2.0931 & 1.2426 & 0.5720 & 1.1397 & \textbf{0.3197} & 1.5459 & - & 0.4630 \\
& 25\% & 2.1723 & 1.3162 & 0.5988 & 1.2931 & \textbf{0.3806} & 1.8797 & - & 0.4951 \\
& 30\% & 2.2552 & 1.4013 & 0.6269 & 1.4768 & \textbf{0.4807} & 2.1364 & - & 0.5184 \\
& 35\% & 2.3278 & 1.5281 & 0.6616 & 1.6672 & 0.6207 & 2.3184 & - & \textbf{0.5386} \\
& 40\% & 2.3970 & 1.7630 & 0.7400 & 1.9127 & 0.9300 & 2.6804 & - & \textbf{0.6140} \\
& 45\% & 2.4605 & 2.0131 & 1.0863 & 2.1703 & 1.4542 & 2.9189 & - & \textbf{0.9321}\\
\bottomrule
\end{tabular}

\caption{Comparison of imputation methods on the finance dataset under different missing mechanisms and missing rates using Wasserstein distance. Best results are shown in bold, - means the method either collapses or the value is too big to be meaningful.}
\label{tab:finance:wass}
\end{table}

\begin{table}[htbp]
\centering
\small
\setlength{\tabcolsep}{4pt}

\begin{tabular}{c|c|cccccccc}
\toprule
\multicolumn{10}{c}{\textbf{RMSE}} \\
\midrule
\textbf{Mechanism} & \textbf{Rate} & Mean & PCA & KNN & MICE & MissForest & GAIN & VAE & Proposal \\
\midrule

\multirow{9}{*}{MCAR}
& 5\%  & 1.0052 & 1.1198 & 0.6375 & 0.6747 & \textbf{0.5665} & 1.2295 & - & 0.8580 \\
& 10\% & 1.0122 & 1.0465 & \textbf{0.6941} & 0.7282 & 0.6808 & 1.0113 & 2.0285 & 0.9230 \\
& 15\% & 1.0047 & 1.0047 & \textbf{0.7267} & 0.7557 & 0.7457 & 1.6948 & - & 1.0318 \\
& 20\% & 1.0043 & 1.0043 & \textbf{0.7656} & 0.7907 & 0.8265 & 0.9247 & - & 1.0819 \\
& 25\% & 1.0010 & 1.0010 & \textbf{0.7964} & 0.8181 & 0.9323 & 0.9752 & - & 1.1267 \\
& 30\% & 1.0047 & 1.0047 & \textbf{0.8390} & 0.8566 & 0.9632 & 1.0106 & - & 1.1802 \\
& 35\% & 1.0087 & 1.0087 & \textbf{0.8839} & 0.8984 & 1.0247 & 1.0842 & - & 1.2428 \\
& 40\% & 1.0098 & 1.0098 & \textbf{0.9245} & 0.9330 & 0.9931 & 1.1633 & - & 1.3072 \\
& 45\% & 1.0072 & 1.0072 & \textbf{0.9652} & 0.9688 & 1.0425 & 1.6922 & - & 1.3875 \\
\midrule

\multirow{9}{*}{MAR}
& 5\%  & 1.0064 & 1.0818 & 0.6257 & 0.6725 & \textbf{0.5099} & 1.2243 & 1.6748 & 0.8511 \\
& 10\% & 0.9922 & 1.0225 & 0.6592 & 0.6984 & \textbf{0.6284} & 0.8972 & - & 0.9041 \\
& 15\% & 0.9917 & 0.9917 & \textbf{0.6962} & 0.7288 & 0.7118 & 1.1041 & - & 0.9689 \\
& 20\% & 0.9892 & 0.9892 & \textbf{0.7353} & 0.7671 & 0.7737 & 1.2540 & 1.1667 & 1.0380 \\
& 25\% & 0.9914 & 0.9914 & \textbf{0.7859} & 0.8111 & 0.8385 & 0.9091 & 1.3215 & 1.1107 \\
& 30\% & 0.9919 & 0.9919 & \textbf{0.8214} & 0.8463 & 0.9829 & 0.8755 & - & 1.1586 \\
& 35\% & 0.9962 & 0.9962 & \textbf{0.8677} & 0.8929 & 0.9581 & 0.9841 & - & 1.2403 \\
& 40\% & 1.0021 & 1.0021 & \textbf{0.9152} & 0.9330 & 1.3216 & 1.3011 & - & 1.3582 \\
& 45\% & 1.0028 & 1.0028 & \textbf{0.9569} & 0.9665 & 1.0520 & 1.0070 & - & 1.3558 \\
\midrule

\multirow{9}{*}{MNAR}
& 5\%  & 1.1651 & 1.1006 & 0.6975 & 0.7582 & \textbf{0.5321} & 1.3763 & - & 0.9268 \\
& 10\% & 1.1061 & 1.1061 & 0.6988 & 0.7799 & \textbf{0.6050} & 1.7733 & - & 0.9823 \\
& 15\% & 1.0637 & 1.0637 & 0.7209 & 0.8087 & \textbf{0.7000} & 0.8755 & - & 1.0316 \\
& 20\% & 1.0337 & 1.0337 & \textbf{0.7352} & 0.8252 & 0.8009 & 0.7711 & - & 1.0914 \\
& 25\% & 1.0085 & 1.0085 & \textbf{0.7532} & 0.8390 & 0.9642 & 0.8691 & - & 1.1234 \\
& 30\% & 0.9892 & 0.9892 & \textbf{0.7915} & 0.8558 & 1.0279 & 1.0071 & - & 1.1686 \\
& 35\% & 0.9795 & 0.9795 & \textbf{0.8280} & 0.8789 & 0.9576 & 1.0470 & - & 1.2794 \\
& 40\% & 0.9764 & 0.9764 & \textbf{0.8759} & 0.9049 & 1.1382 & 1.1427 & - & 1.3142 \\
& 45\% & 0.9848 & 0.9848 & \textbf{0.9333} & 0.9455 & 1.5230 & 1.4324 & - & 1.3551\\
\bottomrule
\end{tabular}

\caption{Comparison of imputation methods on the powerplant dataset under different missing mechanisms and missing rates using RMSE. Best results are shown in bold, - means the method either collapses or the value is too big to be meaningful.}
\label{tab:powerplant:rmse}
\end{table}

\begin{table}[htbp]
\centering
\small
\setlength{\tabcolsep}{4pt}

\begin{tabular}{c|c|cccccccc}
\toprule
\multicolumn{10}{c}{\textbf{Wasserstein Distance}} \\
\midrule
\textbf{Mechanism} & \textbf{Rate} & Mean & PCA & KNN & MICE & MissForest & GAIN & VAE & Proposal \\
\midrule

\multirow{9}{*}{MCAR}
& 5\%  & 0.4808 & 0.4107 & 0.3396 & 0.3490 & \textbf{0.2841} & 0.5069 & - & 0.3059 \\
& 10\% & 0.6583 & 0.5339 & 0.4758 & 0.4869 & \textbf{0.3829} & 0.5890 & 0.5718 & 0.3859 \\
& 15\% & 0.7847 & 0.7846 & 0.5696 & 0.5859 & 0.4622 & 1.0361 & - & \textbf{0.4524} \\
& 20\% & 0.8858 & 0.8858 & 0.6543 & 0.6768 & 0.5291 & 0.7962 & - & \textbf{0.4883} \\
& 25\% & 0.9740 & 0.9739 & 0.7363 & 0.7621 & 0.6188 & 0.8186 & - & \textbf{0.5131} \\
& 30\% & 1.0587 & 1.0587 & 0.8331 & 0.8598 & 0.7070 & 1.0629 & - & \textbf{0.5322} \\
& 35\% & 1.1408 & 1.1409 & 0.9485 & 0.9703 & 0.8313 & 1.0955 & - & \textbf{0.5515} \\
& 40\% & 1.2204 & 1.2204 & 1.0799 & 1.0967 & 0.9865 & 1.2297 & - & \textbf{0.5533} \\
& 45\% & 1.2958 & 1.2958 & 1.2205 & 1.2316 & 1.2036 & 1.5909 & - & \textbf{0.5627} \\
\midrule

\multirow{9}{*}{MAR}
& 5\%  & 0.5038 & 0.4257 & 0.3478 & 0.3577 & \textbf{0.2831} & 0.5226 & 0.4705 & 0.3143 \\
& 10\% & 0.6787 & 0.5633 & 0.4776 & 0.4899 & \textbf{0.3915} & 0.5523 & - & 0.4038 \\
& 15\% & 0.8026 & 0.8022 & 0.5747 & 0.5918 & 0.4749 & 0.7033 & - & \textbf{0.4733} \\
& 20\% & 0.8987 & 0.8987 & 0.6571 & 0.6782 & 0.5303 & 0.9871 & 0.7535 & \textbf{0.5259} \\
& 25\% & 0.9882 & 0.9882 & 0.7499 & 0.7767 & 0.6165 & 0.8260 & 0.8262 & \textbf{0.5627} \\
& 30\% & 1.0647 & 1.0646 & 0.8405 & 0.8742 & 0.7726 & 0.8872 & - & \textbf{0.5813} \\
& 35\% & 1.1408 & 1.1408 & 0.9477 & 0.9807 & 0.8654 & 1.0096 & - & \textbf{0.6026} \\
& 40\% & 1.2169 & 1.2169 & 1.0737 & 1.1005 & 1.1684 & 1.3991 & - & \textbf{0.5862} \\
& 45\% & 1.2937 & 1.2937 & 1.2170 & 1.2295 & 1.2013 & 1.1880 & - & \textbf{0.5620} \\
\midrule

\multirow{9}{*}{MNAR}
& 5\%  & 0.5506 & 0.4201 & 0.3849 & 0.3956 & \textbf{0.3036} & 0.5825 & - & 0.3476 \\
& 10\% & 0.7338 & 0.7350 & 0.5222 & 0.5492 & \textbf{0.4243} & 0.8338 & - & 0.4712 \\
& 15\% & 0.8564 & 0.8568 & 0.6206 & 0.6626 & \textbf{0.5143} & 0.7012 & - & 0.5466 \\
& 20\% & 0.9512 & 0.9516 & 0.7031 & 0.7587 & 0.5991 & 0.7317 & - & \textbf{0.5936} \\
& 25\% & 1.0249 & 1.0249 & 0.7762 & 0.8418 & 0.7113 & 0.8411 & - & \textbf{0.6405} \\
& 30\% & 1.0905 & 1.0904 & 0.8575 & 0.9233 & 0.7972 & 0.9378 & - & \textbf{0.6355} \\
& 35\% & 1.1509 & 1.1510 & 0.9483 & 1.0055 & 0.7939 & 0.9695 & - & \textbf{0.6514} \\
& 40\% & 1.2175 & 1.2176 & 1.0653 & 1.1013 & 1.0889 & 1.1205 & - & \textbf{0.6438} \\
& 45\% & 1.2894 & 1.2894 & 1.2050 & 1.2202 & 1.4672 & 1.5267 & - & \textbf{0.6364}\\
\bottomrule
\end{tabular}

\caption{Comparison of imputation methods on the powerplant dataset under different missing mechanisms and missing rates using Wasserstein distance. Best results are shown in bold, - means the method either collapses or the value is too big to be meaningful.}
\label{tab:powerplant:wass}
\end{table}

\begin{table}[htbp]
\centering
\small
\setlength{\tabcolsep}{4pt}

\begin{tabular}{c|c|cccccccc}
\toprule
\multicolumn{10}{c}{\textbf{RMSE}} \\
\midrule
\textbf{Mechanism} & \textbf{Rate} & Mean & PCA & KNN & MICE & MissForest & GAIN & VAE & Proposal \\
\midrule

\multirow{9}{*}{MCAR}
& 5\%  & 0.9679 & 1.6834 & 0.7992 & 0.7005 & \textbf{0.6382} & 0.8832 & - & 1.0436 \\
& 10\% & 0.9814 & 1.3382 & 0.8170 & 0.7573 & \textbf{0.7165} & 0.9044 & - & 1.1017 \\
& 15\% & 0.9959 & 1.1436 & 0.8381 & 0.8011 & \textbf{0.7957} & 0.9267 & - & 1.1121 \\
& 20\% & 1.0121 & 1.0556 & 0.8694 & \textbf{0.8568} & 0.8698 & 0.9364 & - & 1.1852 \\
& 25\% & 1.0158 & 1.0142 & \textbf{0.8914} & 0.8943 & 0.9242 & 0.9455 & - & 1.2393 \\
& 30\% & 1.0087 & 0.9875 & \textbf{0.8993} & 0.9130 & 0.9601 & 1.0118 & - & 1.3380 \\
& 35\% & 1.0016 & 0.9789 & \textbf{0.9141} & 0.9359 & 1.0121 & 0.9868 & - & 1.3696 \\
& 40\% & 1.0022 & 0.9806 & \textbf{0.9385} & 0.9594 & 1.0572 & 1.0060 & - & 1.7441 \\
& 45\% & 1.0021 & 0.9906 & \textbf{0.9676} & 0.9790 & 1.0729 & 1.0838 & - & 1.4415 \\
\midrule

\multirow{9}{*}{MAR}
& 5\%  & 1.0289 & 1.6952 & 0.8580 & 0.7590 & \textbf{0.7169} & 0.9427 & - & 1.1011 \\
& 10\% & 1.0148 & 1.3366 & 0.8491 & 0.7925 & \textbf{0.7581} & 0.9087 & - & 1.1106 \\
& 15\% & 1.0089 & 1.1503 & 0.8519 & 0.8135 & \textbf{0.8028} & 0.9109 & - & 1.1542 \\
& 20\% & 0.9994 & 1.0353 & 0.8487 & \textbf{0.8401} & 0.8478 & 0.9950 & - & 1.1967 \\
& 25\% & 1.0012 & 0.9792 & \textbf{0.8615} & 0.8674 & 0.8944 & 1.1225 & - & 1.2196 \\
& 30\% & 1.0042 & 0.9695 & \textbf{0.8866} & 0.9012 & 0.9284 & 1.1205 & - & 1.2732 \\
& 35\% & 1.0026 & 0.9748 & \textbf{0.9088} & 0.9307 & 0.9814 & 1.1039 & - & 1.3148 \\
& 40\% & 0.9940 & 0.9705 & \textbf{0.9254} & 0.9481 & 1.0345 & 1.0165 & - & 1.4154 \\
& 45\% & 0.9967 & 0.9862 & \textbf{0.9603} & 0.9762 & 1.0490 & 1.0638 & - & 1.4266 \\
\midrule

\multirow{9}{*}{MNAR}
& 5\%  & 1.6981 & 1.8982 & 1.5098 & 1.4353 & \textbf{1.3438} & 1.4768 & - & 1.5586 \\
& 10\% & 1.4313 & 1.5472 & 1.2617 & 1.2370 & \textbf{1.1585} & 1.3075 & - & 1.3816 \\
& 15\% & 1.2831 & 1.3441 & 1.1306 & 1.1493 & \textbf{1.1107} & 1.2247 & - & 1.3162 \\
& 20\% & 1.1889 & 1.2082 & \textbf{1.0504} & 1.1070 & 1.0974 & 1.1287 & - & 1.2951 \\
& 25\% & 1.1221 & 1.1203 & \textbf{1.0009} & 1.0835 & 1.0982 & 1.0769 & - & 1.3004 \\
& 30\% & 1.0757 & 1.0589 & \textbf{0.9713} & 1.0678 & 1.0935 & 1.0232 & - & 1.3111 \\
& 35\% & 1.0417 & 1.0222 & \textbf{0.9575} & 1.0558 & 1.1103 & 1.1998 & - & 1.3628 \\
& 40\% & 1.0165 & 0.9967 & \textbf{0.9514} & 1.0358 & 1.1125 & 1.1326 & - & 1.4132 \\
& 45\% & 1.0047 & 1.6405 & \textbf{0.9655} & 1.0050 & 1.1477 & 1.2245 & - & 1.3810\\
\bottomrule
\end{tabular}

\caption{Comparison of imputation methods on the wine dataset under different missing mechanisms and missing rates using RMSE. Best results are shown in bold, - means the method either collapses or the value is too big to be meaningful.}
\label{tab:wine:rmse}
\end{table}

\begin{table}[htbp]
\centering
\small
\setlength{\tabcolsep}{4pt}

\begin{tabular}{c|c|cccccccc}
\toprule
\multicolumn{10}{c}{\textbf{Wasserstein Distance}} \\
\midrule
\textbf{Mechanism} & \textbf{Rate} & Mean & PCA & KNN & MICE & MissForest & GAIN & VAE & Proposal \\
\midrule

\multirow{9}{*}{MCAR}
& 5\%  & 0.8077 & 0.9515 & 0.7224 & 0.6592 & \textbf{0.6069} & 0.7613 & - & 0.7772 \\
& 10\% & 1.0463 & 1.1133 & 0.9422 & 0.8905 & \textbf{0.8380} & 0.9933 & - & 1.0047 \\
& 15\% & 1.1840 & 1.1723 & 1.0737 & 1.0352 & \textbf{0.9882} & 1.1221 & - & 1.1181 \\
& 20\% & 1.2776 & 1.2165 & 1.1608 & 1.1356 & \textbf{1.0780} & 1.2087 & - & 1.2030 \\
& 25\% & 1.3530 & 1.2679 & 1.2304 & 1.2178 & \textbf{1.1491} & 1.2679 & - & 1.2540 \\
& 30\% & 1.4192 & 1.3348 & 1.2882 & 1.2863 & \textbf{1.1937} & 1.3602 & - & 1.3070 \\
& 35\% & 1.4859 & 1.4111 & 1.3487 & 1.3664 & \textbf{1.2493} & 1.3743 & - & 1.3362 \\
& 40\% & 1.5633 & 1.5034 & 1.4300 & 1.4669 & \textbf{1.3104} & 1.5175 & - & 1.3968 \\
& 45\% & 1.6540 & 1.6204 & 1.5538 & 1.5954 & 1.4767 & 1.6584 & - & \textbf{1.3768} \\
\midrule

\multirow{9}{*}{MAR}
& 5\%  & 0.8293 & 0.9591 & 0.7397 & 0.6798 & \textbf{0.6428} & 0.7764 & - & 0.7925 \\
& 10\% & 1.0627 & 1.1027 & 0.9554 & 0.9073 & \textbf{0.8536} & 0.9818 & - & 1.0021 \\
& 15\% & 1.1914 & 1.1710 & 1.0771 & 1.0409 & \textbf{0.9802} & 1.1061 & - & 1.1239 \\
& 20\% & 1.2889 & 1.2161 & 1.1638 & 1.1420 & \textbf{1.0770} & 1.2492 & - & 1.2083 \\
& 25\% & 1.3684 & 1.2724 & 1.2327 & 1.2213 & \textbf{1.1491} & 1.3041 & - & 1.2615 \\
& 30\% & 1.4346 & 1.3385 & 1.2950 & 1.2967 & \textbf{1.1997} & 1.4280 & - & 1.3040 \\
& 35\% & 1.5011 & 1.4222 & 1.3613 & 1.3806 & \textbf{1.2549} & 1.4123 & - & 1.3272 \\
& 40\% & 1.5691 & 1.5068 & 1.4365 & 1.4734 & \textbf{1.3206} & 1.4902 & - & 1.3834 \\
& 45\% & 1.6556 & 1.6219 & 1.5589 & 1.6006 & 1.4711 & 1.5770 & - & \textbf{1.3812} \\
\midrule

\multirow{9}{*}{MNAR}
& 5\%  & 1.0044 & 0.9647 & 0.9218 & 0.8631 & \textbf{0.8058} & 0.9108 & - & 0.8879 \\
& 10\% & 1.2026 & 1.1436 & 1.1105 & 1.0672 & \textbf{1.0054} & 1.1298 & - & 1.0668 \\
& 15\% & 1.3140 & 1.2367 & 1.2128 & 1.1934 & \textbf{1.1448} & 1.2459 & - & 1.1675 \\
& 20\% & 1.3893 & 1.3089 & 1.2803 & 1.2907 & 1.2288 & 1.3220 & - & \textbf{1.2210} \\
& 25\% & 1.4463 & 1.3628 & 1.3296 & 1.3681 & 1.2964 & 1.3604 & - & \textbf{1.2636} \\
& 30\% & 1.4977 & 1.4195 & 1.3720 & 1.4406 & 1.3489 & 1.4271 & - & \textbf{1.3043} \\
& 35\% & 1.5465 & 1.4790 & 1.4164 & 1.5089 & 1.4050 & 1.5566 & - & \textbf{1.3440} \\
& 40\% & 1.6030 & 1.5486 & 1.4725 & 1.5762 & 1.4440 & 1.5921 & - & \textbf{1.3870} \\
& 45\% & 1.6730 & \textbf{0.9928} & 1.5690 & 1.6398 & 1.5664 & 1.6800 & - & 1.4303\\
\bottomrule
\end{tabular}

\caption{Comparison of imputation methods on the wine dataset under different missing mechanisms and missing rates using Wasserstein distance. Best results are shown in bold, - means the method either collapses or the value is too big to be meaningful.}
\label{tab:wine:wass}
\end{table}

\begin{table}[htbp]
\centering
\small
\setlength{\tabcolsep}{4pt}

\begin{tabular}{c|c|cccccccc}
\toprule
\multicolumn{10}{c}{\textbf{RMSE}} \\
\midrule
\textbf{Mechanism} & \textbf{Rate} & Mean & PCA & KNN & MICE & MissForest & GAIN & VAE & Proposal \\
\midrule

\multirow{9}{*}{MCAR}
& 5\%  & 0.9973 & 0.1450 & 0.1005 & \textbf{0.0066} & 0.0094 & 0.0651 & 0.0736 & 0.0674 \\
& 10\% & 0.9981 & 0.1452 & 0.0999 & \textbf{0.0059} & 0.0110 & 0.0821 & 0.0881 & 0.0671 \\
& 15\% & 0.9984 & 0.1453 & 0.1002 & \textbf{0.0060} & 0.0125 & 0.0741 & - & 0.0673 \\
& 20\% & 0.9983 & 0.1455 & 0.1001 & \textbf{0.0061} & 0.0155 & 0.0769 & - & 0.0671 \\
& 25\% & 0.9981 & 0.1457 & 0.1004 & \textbf{0.0054} & 0.0194 & 0.0856 & - & 0.0677 \\
& 30\% & 0.9983 & 0.1461 & 0.1007 & \textbf{0.0078} & 0.0263 & 0.1956 & - & 0.0679 \\
& 35\% & 0.9984 & 0.1467 & 0.1011 & \textbf{0.0173} & 0.0383 & 0.1267 & - & 0.0679 \\
& 40\% & 0.9986 & 0.1479 & 0.1022 & \textbf{0.0447} & 0.0620 & 0.1433 & - & 0.0685 \\
& 45\% & 0.9987 & 0.1524 & 0.1063 & 0.9002 & 0.1393 & 0.3140 & - & \textbf{0.0735} \\
\midrule

\multirow{9}{*}{MAR}
& 5\%  & 1.0638 & 0.1482 & 0.1161 & \textbf{0.0067} & 0.0125 & 0.0689 & 0.0823 & 0.0709 \\
& 10\% & 1.0403 & 0.1467 & 0.1084 & \textbf{0.0060} & 0.0126 & 0.0726 & - & 0.0691 \\
& 15\% & 1.0252 & 0.1458 & 0.1046 & \textbf{0.0061} & 0.0139 & 0.0954 & - & 0.0676 \\
& 20\% & 1.0136 & 0.1458 & 0.1025 & \textbf{0.0062} & 0.0161 & 0.1697 & - & 0.0670 \\
& 25\% & 1.0047 & 0.1458 & 0.1011 & \textbf{0.0058} & 0.0199 & 0.2859 & - & 0.0667 \\
& 30\% & 0.9983 & 0.1460 & 0.1005 & \textbf{0.0071} & 0.0265 & 0.3470 & - & 0.0671 \\
& 35\% & 0.9947 & 0.1465 & 0.1004 & \textbf{0.0151} & 0.0375 & 0.6952 & - & 0.0670 \\
& 40\% & 0.9932 & 0.1474 & 0.1009 & \textbf{0.0511} & 0.0596 & 1.1034 & - & 0.0685 \\
& 45\% & 0.9942 & 0.1520 & 0.1055 & 0.8242 & 0.1270 & 1.1228 & - & \textbf{0.0736} \\
\midrule

\multirow{9}{*}{MNAR}
& 5\%  & 1.1436 & 0.1639 & 0.1150 & \textbf{0.0080} & 0.0132 & 0.0798 & 0.0751 & 0.0779 \\
& 10\% & 1.0926 & 0.1628 & 0.1126 & \textbf{0.0139} & 0.0395 & 0.1948 & 0.0919 & 0.0762 \\
& 15\% & 1.0550 & 0.1618 & 0.1110 & \textbf{0.0267} & 0.0593 & 0.3195 & 0.1316 & 0.0773 \\
& 20\% & 1.0266 & 0.1609 & 0.1098 & \textbf{0.0438} & 0.0834 & 0.4853 & - & 0.0761 \\
& 25\% & 1.0059 & 0.1601 & 0.1093 & \textbf{0.0681} & 0.1031 & 1.2109 & - & 0.0741 \\
& 30\% & 0.9921 & 0.1612 & 0.1097 & 0.1109 & 0.1270 & 1.2721 & - & \textbf{0.0744} \\
& 35\% & 0.9850 & 0.1730 & 0.1124 & 0.1823 & 0.1744 & 1.1063 & - & \textbf{0.0777} \\
& 40\% & 0.9837 & 0.2167 & 0.1277 & 0.2881 & 0.2461 & 1.4906 & - & \textbf{0.0945} \\
& 45\% & 0.9879 & 0.3512 & 0.1956 & 0.4636 & 0.3722 & 1.7851 & - & \textbf{0.1736}\\
\bottomrule
\end{tabular}

\caption{Comparison of imputation methods on the facial expression dataset under different missing mechanisms and missing rates using RMSE. Best results are shown in bold, - means the method either collapses or the value is too big to be meaningful.}
\label{tab:facialexpression:rmse}
\end{table}

\begin{table}[htbp]
\centering
\small
\setlength{\tabcolsep}{4pt}

\begin{tabular}{c|c|cccccccc}
\toprule
\multicolumn{10}{c}{\textbf{Wasserstein Distance}} \\
\midrule
\textbf{Mechanism} & \textbf{Rate} & Mean & PCA & KNN & MICE & MissForest & GAIN & VAE & Proposal \\
\midrule

\multirow{9}{*}{MCAR}
& 5\%  & 2.1107 & 0.7430 & 0.5838 & \textbf{0.1526} & 0.1674 & 0.5151 & 0.4689 & 0.5025 \\
& 10\% & 2.5143 & 0.8926 & 0.6981 & \textbf{0.1734} & 0.2191 & 0.6938 & 0.5846 & 0.6002 \\
& 15\% & 2.7719 & 1.0175 & 0.7757 & \textbf{0.1936} & 0.2645 & 0.7285 & - & 0.6666 \\
& 20\% & 2.9654 & 1.1191 & 0.8582 & \textbf{0.2105} & 0.3165 & 0.8044 & - & 0.7156 \\
& 25\% & 3.1244 & 1.2027 & 0.9359 & \textbf{0.2011} & 0.3779 & 0.9093 & - & 0.7724 \\
& 30\% & 3.2595 & 1.2575 & 1.0057 & \textbf{0.2432} & 0.4633 & 1.4685 & - & 0.8289 \\
& 35\% & 3.3775 & 1.3031 & 1.0632 & \textbf{0.3641} & 0.5860 & 1.2297 & - & 0.8711 \\
& 40\% & 3.4826 & 1.3393 & 1.0997 & \textbf{0.6715} & 0.8098 & 1.3634 & - & 0.9136 \\
& 45\% & 3.5810 & 1.3662 & 1.1326 & 3.3864 & 1.2355 & 1.9350 & - & \textbf{0.9495} \\
\midrule

\multirow{9}{*}{MAR}
& 5\%  & 2.1519 & 0.7441 & 0.5924 & \textbf{0.1535} & 0.1710 & 0.5227 & 0.4714 & 0.5074 \\
& 10\% & 2.5456 & 0.8971 & 0.7041 & \textbf{0.1741} & 0.2193 & 0.6470 & - & 0.6035 \\
& 15\% & 2.7936 & 1.0131 & 0.7875 & \textbf{0.1945} & 0.2650 & 0.7977 & - & 0.6654 \\
& 20\% & 2.9809 & 1.1186 & 0.8609 & \textbf{0.2114} & 0.3144 & 1.0547 & - & 0.7148 \\
& 25\% & 3.1307 & 1.2050 & 0.9388 & \textbf{0.2131} & 0.3766 & 1.6357 & - & 0.7660 \\
& 30\% & 3.2618 & 1.2602 & 1.0057 & \textbf{0.2387} & 0.4643 & 1.8959 & - & 0.8214 \\
& 35\% & 3.3766 & 1.3020 & 1.0570 & \textbf{0.3568} & 0.5886 & 2.7337 & - & 0.8702 \\
& 40\% & 3.4813 & 1.3419 & 1.1040 & \textbf{0.7643} & 0.8111 & 3.7308 & - & 0.9139 \\
& 45\% & 3.5791 & 1.3648 & 1.1375 & 3.2397 & 1.2195 & 3.7267 & - & \textbf{0.9465} \\
\midrule

\multirow{9}{*}{MNAR}
& 5\%  & 2.0964 & 0.7498 & 0.5938 & \textbf{0.1577} & 0.1790 & 0.5264 & 0.4653 & 0.5063 \\
& 10\% & 2.4793 & 0.9228 & 0.7199 & \textbf{0.1844} & 0.2600 & 0.7431 & 0.6005 & 0.6059 \\
& 15\% & 2.7317 & 1.0431 & 0.8250 & \textbf{0.2316} & 0.3530 & 1.0947 & 0.7633 & 0.6829 \\
& 20\% & 2.9193 & 1.1375 & 0.9061 & \textbf{0.3032} & 0.4774 & 1.6926 & - & 0.7435 \\
& 25\% & 3.0790 & 1.2120 & 0.9787 & \textbf{0.4461} & 0.6278 & 2.8809 & - & 0.7912 \\
& 30\% & 3.2175 & 1.2783 & 1.0379 & \textbf{0.6796} & 0.8128 & 3.1091 & - & 0.8407 \\
& 35\% & 3.3423 & 1.3363 & 1.0886 & 1.0062 & 1.0434 & 3.0137 & - & \textbf{0.8987} \\
& 40\% & 3.4571 & 1.4216 & 1.1421 & 1.3969 & 1.3196 & 3.8546 & - & \textbf{0.9427} \\
& 45\% & 3.5643 & 1.6991 & 1.2442 & 2.0484 & 1.7205 & 4.4749 & - & \textbf{1.0328}\\
\bottomrule
\end{tabular}

\caption{Comparison of imputation methods on the facial expression dataset under different missing mechanisms and missing rates using Wasserstein distance. Best results are shown in bold, - means the method either collapses or the value is too big to be meaningful.}
\label{tab:facialexpression:wass}
\end{table}

\begin{table}[htbp]
\centering
\small
\setlength{\tabcolsep}{4pt}

\begin{tabular}{c|c|cccccccc}
\toprule
\multicolumn{10}{c}{\textbf{Prediction RMSE}} \\
\midrule
\textbf{Mechanism} & \textbf{Rate} & Mean & PCA & KNN & MICE & MissForest & GAIN & VAE & Proposal \\
\midrule

\multirow{1}{*}{Complete}
& 0\%  & 0.2661 & 0.2661 & 0.2661 & 0.2661 & 0.2661 & 0.2661 & 0.2661 & 0.2661 \\
\midrule

\multirow{9}{*}{MCAR}
& 5\%  & 0.3155 & 0.2806 & 0.2846 & 0.2667 & \textbf{0.2665} & 0.2822 & 0.2884 & 0.2748 \\
& 10\% & 0.3526 & 0.2881 & 0.2911 & 0.2709 & \textbf{0.2672} & 0.2893 & 0.2970 & 0.2806 \\
& 15\% & 0.4133 & 0.2970 & 0.2951 & 0.2707 & \textbf{0.2677} & 0.2950 & 0.3047 & 0.2827 \\
& 20\% & 0.5345 & 0.3110 & 0.2983 & 0.2735 & \textbf{0.2681} & 0.3032 & 0.3097 & 0.2824 \\
& 25\% & 0.7718 & 0.3336 & 0.3023 & 0.2804 & \textbf{0.2699} & 0.3084 & 0.3195 & 0.2842 \\
& 30\% & 1.2563 & 0.3732 & 0.3058 & 0.3139 & \textbf{0.2728} & 0.3339 & 0.3283 & 0.2846 \\
& 35\% & 2.3976 & 0.4795 & 0.3071 & 0.3391 & \textbf{0.2764} & 0.3584 & 0.3480 & 0.2829 \\
& 40\% & 5.5979 & 0.9595 & 0.3142 & 0.5147 & 0.2843 & 0.4152 & 0.3826 & \textbf{0.2815} \\
& 45\% & 19.8897 & 6.3432 & 0.3142 & 2.2280 & 0.3178 & 0.5523 & 0.4805 & \textbf{0.2800} \\
\midrule

\multirow{9}{*}{MAR}
& 5\%  & 0.3227 & 0.2816 & 0.2843 & 0.2666 & \textbf{0.2663} & 0.2810 & 0.2879 & 0.2751 \\
& 10\% & 0.3714 & 0.2912 & 0.2905 & 0.2718 & \textbf{0.2664} & 0.2882 & 0.2975 & 0.2799 \\
& 15\% & 0.4603 & 0.3051 & 0.2944 & 0.2730 & \textbf{0.2675} & 0.2960 & 0.3035 & 0.2816 \\
& 20\% & 0.6259 & 0.3263 & 0.2983 & 0.2783 & \textbf{0.2685} & 0.3082 & 0.3130 & 0.2818 \\
& 25\% & 0.9044 & 0.3473 & 0.3016 & 0.2961 & \textbf{0.2693} & 0.3222 & 0.3198 & 0.2835 \\
& 30\% & 1.4504 & 0.3954 & 0.3041 & 0.3202 & \textbf{0.2722} & 0.3678 & 0.3297 & 0.2825 \\
& 35\% & 2.6638 & 0.4739 & 0.3065 & 0.3618 & \textbf{0.2749} & 0.4505 & 0.3474 & 0.2825 \\
& 40\% & 5.7330 & 0.7161 & 0.3122 & 0.7238 & \textbf{0.2804} & 0.6194 & 0.3885 & 0.2811 \\
& 45\% & 17.7526 & 2.8746 & 0.3246 & 2.6903 & 0.3093 & 0.6543 & 0.4923 & \textbf{0.2833} \\
\midrule

\multirow{9}{*}{MNAR}
& 5\%  & 0.3526 & 0.2876 & 0.2883 & 0.2703 & \textbf{0.2681} & 0.2846 & 0.2910 & 0.2764 \\
& 10\% & 0.4495 & 0.2981 & 0.2931 & 0.2756 & \textbf{0.2690} & 0.2996 & 0.3021 & 0.2783 \\
& 15\% & 0.6217 & 0.3102 & 0.2975 & 0.2815 & \textbf{0.2691} & 0.3201 & 0.3118 & 0.2793 \\
& 20\% & 0.8702 & 0.3316 & 0.3001 & 0.2975 & \textbf{0.2720} & 0.3505 & 0.3171 & 0.2808 \\
& 25\% & 1.3047 & 0.3808 & 0.3028 & 0.3170 & \textbf{0.2774} & 0.5281 & 0.3291 & 0.2810 \\
& 30\% & 2.0948 & 0.4412 & 0.3065 & 0.3879 & 0.2824 & 0.5836 & 0.3421 & \textbf{0.2807} \\
& 35\% & 3.6230 & 0.5280 & 0.3111 & 0.5290 & 0.2874 & 0.8792 & 1.0014 & \textbf{0.2813} \\
& 40\% & 7.3746 & 0.6429 & 0.3188 & 0.9523 & 0.2997 & 2.9365 & 1.0014 & \textbf{0.2813} \\
& 45\% & 20.4163 & 6.5636 & 0.3191 & 3.0217 & 0.3631 & 0.9880 & 0.5243 & \textbf{0.2876}\\
\bottomrule
\end{tabular}

\caption{Comparison of imputation methods on the superconductivity dataset under different missing mechanisms and missing rates. Performance is evaluated using RMSE on a downstream regression task. Best results are shown in bold, - means the method either collapses or the value is too big to be meaningful.}
\label{tab:superconductivity:pred}
\end{table}

\end{document}